\date{}
\renewcommand{\paragraph}[1]{\par\textbf{#1}}
\definecolor{captiongray}{HTML}{555555}
\newtheoremstyle{Hogg}
  {0.5\baselineskip}
  {0.5\baselineskip}
  {}
  {}
  {\itshape}
  {:}
  {1ex}
  {}
\theoremstyle{Hogg}
\newtheorem{theorem}{Theorem}
\newtheorem{lemma}[theorem]{Lemma}
\newtheorem{proposition}[theorem]{Proposition}
\newtheorem{definition}{Definition}
\newcommand{\inner}[2]{\langle{#1},{#2}\rangle}
\renewenvironment{proof}[1][\proofname]{\par
  \vspace{-\topsep}
  \pushQED{\qed}%
  \normalfont
  \topsep0pt \partopsep0pt 
  \trivlist
  \item[\hskip\labelsep
        \itshape
    #1\@addpunct{.}]\ignorespaces
}{%
  \popQED\endtrivlist\@endpefalse
  \addvspace{0pt plus 0pt} 
}
\title{\bfseries%
Scalars are universal: Equivariant machine learning, structured like classical physics}
\author{%
Soledad~Villar\\
Department of Applied Mathematics and Statistics\\
Johns Hopkins University
\And
David~W.~Hogg\\
Flatiron Institute\\
a divison of the Simons Foundation
\And
Kate~Storey-Fisher\\
Center for Cosmology and Particle Physics\\
Department of Physics,
New York University
\And
Weichi~Yao\\
Department of Technology, Operations, and Statistics\\
Stern School of Business,
New York University
\And
Ben~Blum-Smith\\
Center for Data Science\\
New York University
}
\renewcommand{\ldots}{.\,.\,}
\renewcommand{\cdots}{\ldots}
\begin{document}
\maketitle

\begin{abstract}\noindent
    There has been enormous progress in the last few years in designing  neural networks that respect the fundamental symmetries and coordinate freedoms of physical law.
    Some of these frameworks make use of irreducible representations, some make use of high-order tensor objects, and some apply symmetry-enforcing constraints.
    Different physical laws obey different combinations of fundamental symmetries, but a large fraction (possibly all) of classical physics is equivariant to translation, rotation, reflection (parity), boost (relativity), and permutations.
    Here we show that it is simple to parameterize universally approximating polynomial functions that are equivariant under these symmetries, or under the Euclidean, Lorentz, and Poincar\'e groups, at any dimensionality $d$.
    The key observation is that nonlinear O($d$)-equivariant (and related-group-equivariant) functions can be universally expressed in terms of a lightweight collection of scalars---scalar products and scalar contractions of the scalar, vector, and tensor inputs.
    We complement our theory with numerical examples that show that the scalar-based method is simple, efficient, and scalable. 
\end{abstract}

\section{Introduction}
There is a great deal of current interest in building machine-learning methods that respect exact or approximate symmetries, such as translation, rotation, and physical gauge symmetries \cite{cohen2019gauge, kondor2018n}.
Some of this interest is inspired by the great success of convolutional neural networks (CNNs)~\cite{lecun2015deep}, which are naturally translation equivariant.
The implementation of convolutional layers in CNNs has been given significant credit for the success of deep learning, in a domain (natural images) in which the convolutional symmetry is only approximately valid.
In many data-analysis problems in astronomy, physics, and chemistry there are \emph{exact} symmetries that must be obeyed by any generalizable law or rule.
Since the approximate symmetries introduced by convolutional networks help in the natural-image domain, then we have high hopes for the value of encoding exact symmetries for problems where these symmetries are known to hold exactly. %

In detail, the symmetries of physics are legion.
Translation symmetry (including conservation of linear momentum), rotation symmetry (including conservation of angular momentum), and time-translation symmetry (including conservation of energy) are the famous symmetries \cite{noether}.
But there are many more:
there is a form of reflection symmetry (charge-parity-time or CPT);
there is a symplectic symmetry that permits reinterpretations of positions and momenta;
there are Lorentz and Poincar\'e symmetries (the fundamental symmetries of Galilean relativity and special relativity) that include velocity boosts;
there is the generalization of these (general covariance) that applies in curved spacetime;
there are symmetries associated with baryon number, lepton number, flavor, and color;
and there are dimensional and units symmetries (not usually listed as symmetries, but they are) that restrict what kinds of quantities can be multiplied or added.
If it were possible to parameterize a universal or universally approximating function space that is explicitly equivariant under a large set of non-trivial symmetries, that function space would---in some sense---contain within it \emph{every possible law of physics}.
It would also provide a basis for good new machine learning methods.

The most expressive approaches to equivariant machine learning make use of irreducible representations of the relevant symmetry groups. Implementing these approaches requires a way to explicitly decompose tensor products of known representations into their irreducible components. This is called the {\em Clebsch-Gordan problem}. The solution for SO($3$) has been implemented, and there is recent exciting progress for other Lie groups \cite{alex2011numerical, ibort2017new}. This is an area of current, active research.

Here we give an approach that bypasses the need for a solution to this problem.
We find that, for a large class of problems relevant to classical physics, the space of equivariant functions can be constructed from functions only of a complete subset of the scalar products and scalar contractions of the input vectors and tensors.
That is, invariant scalars are powerful objects, and point the way to building universally approximating functions that are constrained by exact, non-trivial symmetries.

\emph{Equivariance} is the form of symmetry core to physical law: For an equivariant function, transforming the input results in an output representation transformed in the same way.
An \emph{invariant} function, on the other hand, produces the same output for transformed and non-transformed inputs.
\begin{definition} Given a function $f:\mathcal X \to \mathcal Y$ and a group $G$ acting on $\mathcal X$ and in $\mathcal Y$ as $\star$ (possibly the action is defined differently in $\mathcal X$ and $\mathcal Y$). We say that $f$ is:
\begin{eqnarray}
    G\text{-invariant: } & f(g\star   x) = f(  x) & \text { for all }   x \in \mathcal X, \; g\in G ~;\\ 
    G\text{-equivariant: } & f(g \star   x) = g\star f(  x) & \text { for all }   x \in \mathcal X, \; g\in G ~.
\end{eqnarray}
\end{definition}

\paragraph{Our contributions:} \label{sec.contributions}
In this work we provide a complete and computationally tractable characterization of all scalar functions $f:(\mathbb R^d)^n \to \mathbb R$, and of all vector functions $h:(\mathbb R^d)^n \to \mathbb R^d$ that satisfy all of the symmetries of classical physics.
The groups corresponding to these symmetries are given in Table~\ref{tab.groups};
they act according to the rules in Table~\ref{tab.actions}.
The characterization we provide is physically principled: It is based on invariant scalars. It is also connected to the symmetries encoded in the Einstein summation rules, a common notation in physics to write expressions compactly but that also allows only equivariant objects to be produced (see Appendix~\ref{sec.einstein}).
\begin{table}[t]
    \begin{mdframed}%
    \centering%
    \begin{tabular}{rl}
    Orthogonal &
    $\text{O}(d) =\{ Q \in \mathbb R^{d\times d}: Q^\top Q = Q\,Q^\top = I_d\}, \label{eq.o}$
    \\[0.5ex]
    Rotation &
    $\text{SO}(d) =\{ Q \in \mathbb R^{d\times d}: Q^\top Q = Q\,Q^\top = I_d, \; \label{eq.so} \operatorname{det}(Q)=1\}$
    \\[0.5ex]
    Translation &
    $\text{T}(d) =\{ w \in \mathbb R^{d} \}$
    \\[0.5ex]
    Euclidean &
    $\text{E}(d) = \text{T}(d) \rtimes \text{O}(d)$
    \\[0.5ex]
    Lorentz &
    $\text{O}(1,d) =\{ Q \in \mathbb R^{(d+1)\times (d+1)}: Q^\top\Lambda\, Q =\Lambda, \,\Lambda=\text{diag}([1,-1,\ldots,-1]) \}$
    \\[0.5ex]
    Poincar\'e &
    $\text{IO}(1,d) = \text{T}(d+1) \rtimes \text{O}(1,d)$
    \\[0.5ex]
    Permutation & $\text{S}_n=\{\sigma:[n]\to [n] \text{ bijective function}\}$
    \\[1ex]
    \end{tabular}
    \caption{\textbf{The groups considered in this work.}}
    \label{tab.groups}
    \end{mdframed}
    \begin{mdframed}%
    \begin{tabular}{rl}
    Orthogonal; Lorentz &
    $Q\star(v_1,\cdots, v_n) = (Q\,v_1, \cdots, Q\,v_n)$
    \\[0.5ex]
    Translation &
    $w\star(v_1,\cdots, v_n) = (v_1 + w, \cdots, v_k + w, v_{k+1},\ldots, v_n)$
    \\
    & (where the first $k$ vectors are position vectors)
    \\[0.5ex]
    Euclidean; Poincar\'e &
    $(w,Q)\star(v_1,\cdots, v_n) = (Q\,v_1 + w, \cdots, Q\,v_k + w, Q\,v_{k+1}, \cdots, Q\,v_n)$ 
    \\[0.5ex]
    Permutation & $\sigma\star(v_1,\ldots, v_n)=(v_{\sigma(1)},\ldots,v_{\sigma(n)})$
    \\[1ex]
    \end{tabular}
    \caption{\textbf{The actions of the groups on vectors.} For the Euclidean group, the position vectors are positions of points; for the Poincar\'e group, the position vectors are positions of \emph{events}.}
    \label{tab.actions}
    \end{mdframed}
\end{table}

Our characterization is based on simple mathematical observations. The first is the First Fundamental Theorem of Invariant Theory for O($d$): \emph{a function of vector inputs returns an invariant scalar if and only if it can be written as a function only of the invariant scalar products of the input vectors} \cite[Section~II.A.9]{weyl}. There is a similar statement for the Lorentz group O($1,d$). The second observation is that \emph{a function of vector inputs returns an equivariant vector if and only if it can be written as a linear combination of invariant scalar functions times the input vectors}.
 In particular, if $h:(\mathbb R^d)^n \to \mathbb R^d$ of inputs $v_1,\ldots,v_n$ is O($d$) or O($1,d$)-equivariant, then it can be expressed as:
\begin{align}
    h(v_1, v_2, \cdots, v_n) &= \sum_{t=1}^n f_t\Big(\inner{v_i}{v_j}_{i,j=1}^n\Big)\,v_t
    ~, \label{od.equivariant}
\end{align}
where $f_t$ can be arbitrary functions, but if $h$ is a polynomial function the $f_t$ can be chosen to be polynomials. In other words, the O($d$) and O($1,d$)-equivariant vector functions are generated as a module over the ring of invariant scalar functions by the projections to each input vector. In this expression, $\inner{\cdot}{\cdot}$ denotes the invariant scalar product, which can be the usual Euclidean inner product, or the Minkowski inner product defined in terms of a metric $\Lambda$ (see Table~\ref{tab.groups}): 
\begin{align} \label{eq.ip}
    \text{Euclidean: } \inner{v_i}{v_j}=v_i^\top v_j~, \quad & \quad \text{Minkowski: }\inner{v_i}{v_j} =v_i^\top \Lambda \,v_j ~.
\end{align}

Our mathematical observations lead to a simple characterization for a very general class of equivariant functions, simpler than any based on irreducible representations or the imposition of symmetries through constraints (these methods are currently state of the art; see Section~\ref{sec.related}).
This implies that very simple neural networks based on scalar products of input vectors---that enormously generalize those in \cite{enequivariant}---can universally approximate invariant and equivariant functions. This justifies the numerical success of the neural network model proposed in \cite{enequivariant}, and mathematically shows that their method can be extended to a universal equivariant architecture with respect to more general group actions. 
We note that the formulation in \eqref{od.equivariant} might superficially resemble an attention mechanism \cite{velivckovic2017graph}, but it actually comes from the characterization of invariant functions of the group, in particular the picture for SO($d$)-invariant functions is a little different (see Proposition \ref{prop.sod}). 

In Section~\ref{sec.related} (and Appendix~\ref{sec:app.irred}) we describe the state of the art for encoding symmetries, the expressive power of graph neural networks, and universal approximation of invariant and equivariant functions.
In Section~\ref{sec.equivariance} we mathematically characterize the invariant and equivariant functions with respect to the groups in Table~\ref{tab.groups}.
In Section~\ref{sec.examples} we present some examples of physically meaningful equivariant functions, and show how to express them in the parameterization developed in Section~\ref{sec.equivariance}.
In Section~\ref{sec.howmany} and Appendix \ref{app.matrix.completion} we discuss which (of all possible) pairwise inner products one ought to provide, and in Section~\ref{sec.limitations} we discuss some limitations of our approach.
We present numerical experiments using our scalar-based approach compared to other methods in Section~\ref{sec.numerical} (see also \cite{yao2021simple}).

We also note that the symmetries considered in this work are all global symmetries, as they act on all points in the same way.
Our characterization thus does not obviously generalize to all gauge symmetries, which are local symmetries that apply changes independently to points at different locations (see \cite{Bronstein2021GeometricDL, weiler2021coordinate}).
That said, we believe our model could be made general enough to encompass gauge symmetries if we replace the global metric by any position-dependent metric $\Lambda_x$. 
In this case, spatially separated vectors would need to be propagated to the same point such that they can be input to locally invariant functions, and this can be done with parallel transport.
The parallel transport operations would also have to obey our invariance characterization, but we believe this is possible to do; we will explore it in future work.

\section{Related work} \label{sec.related}

\paragraph{Group invariant and equivariant neural networks:}
Symmetries have been used successfully for learning functions on images, sets, point clouds, and graphs. Neural networks can be designed to parameterize classes of functions satisfying different forms of symmetries, from the classical (approximately) translation-invariant convolutional neural networks \cite{lecun1989backpropagation}---as well as new approaches that enforce additional symmetries (rotation, scale) on these networks \cite{wang2021incorporating, fuchs2020se, thomas2018tensor}---to more recent architectures that define permutation invariant and equivariant functions on point clouds pioneered by deep sets and pointnets \cite{zaheer2017deep, qi2017pointnet, qi2017pointnet++}, to permutation-equivariant functions on graphs expressed as graph neural networks \cite{gori2005new, scarselli2008graph, kipf2016semi, duvenaud2015convolutional, gilmer2017neural,  chen2019cdsbm}.

For instance, deep sets and pointnets parameterize functions on $(\mathbb R^d)^n $ that are invariant or equivariant with respect to the group of permutations $S_n$ acting  as in Table~\ref{tab.actions}. Invariant theory shows that all invariant and equivariant functions with respect to such actions can be approximated by easily characterized invariant polynomials \cite{weyl}. However, the permutation group can act in significantly more complicated ways. For instance, graph neural networks are equivariant with respect to a different action by permutations (conjugation) that is much harder to characterize (see Appendix~\ref{sec:app.irred}).

Since being introduced by \cite{qi2017pointnet, zaheer2017deep}, deep learning on point clouds has been extremely fruitful, especially in computer vision \cite{guo2020deep}. Recently, new symmetries and invariances have been incorporated into the design of neural networks on point clouds; especially invariances and equivariances with respect to rigid motions such as translations and rotations.
Many architectures have been proposed to satisfy those symmetries such as \cite{thomas2018tensor} based on irreducible representations (irreps), \cite{kondor2018covariant, zhang2019rotation} based on convolutions, \cite{fuchs2020se} employing spherical harmonics and irreps, \cite{zhao2020quaternion} using quaternions, and \cite{finzi2021practical} applying a set of constraints to satisfy the symmetries. Most of the implementations of the approaches mentioned above (except for \cite{finzi2021practical}) are limited to 2D or 3D point clouds.
We provide an overview of the main approaches below and in Appendix~\ref{sec:app.irred}.

Recently \cite{ravanbakhsh2017sharing} developed an approach to enforcing permutation equivariance in neural network layers using parameter sharing, which can then model other symmetry groups. The weight sharing approach is significantly simpler to implement than the ones described above, and it has been proven to be very successful in practice, with several applications \cite{wang2020towards, wang2021incorporating}, including autonomous driving~\cite{huang2021traffic}. Characterizing the space of the functions this approach can express is an interesting open problem. 

\paragraph{Universal approximation via linear invariant layers and irreducible representations:} 
Universally approximating invariant functions can be obtained by taking universal non-invariant functions and averaging them over the group orbits 
\cite{yarotsky2018universal, murphy2019relational}. However, this approach is not practical for large groups like $S_n$ or infinite groups like O($d$).
A classical result in neural networks shows that feed-forward networks with non-polynomial activations can universally approximate continuous functions \cite{leshno1993multilayer}.
This arguably inspired the use of neural networks that are the composition of linear invariant or equivariant layers with compatible non-linear activation functions to create expressive equivariant models \cite{kondor2018n, maron2018invariant, maron2020learning}. Linear $G$-equivariant functions can be written in terms of the irreducible representations of $G$ (we explain this and refer the reader to related literature in Appendix~\ref{sec:app.irred}).
However, the explicit parameterization of the linear maps is only known for a few groups (for instance, SO(3)-equivariant linear maps are parameterized using the Clebsh-Gordan coefficients \cite{fuchs2020se, thomas2018tensor, bogatskiy2020lorentz}). 
Moreover, very recent work \cite{dym2020universality} shows that the classes of functions defined in terms of neural networks over irreducible representations in \cite{fuchs2020se, thomas2018tensor} are universal. In particular, every continous SO(3)-equivariant function can be approximated uniformly in compacts sets by those neural networks. 
Despite universality, there is a limitation to this approach: Although decompositions into irreps are broadly studied in mathematics (also as plethysms), the explicit transformation to the irreps is not known or possible for general groups.  This is in fact an area of current, active research, where there has been recent exciting progress for other Lie groups \cite{alex2011numerical, ibort2017new}, but the implementation is still limited. 

\paragraph{Expressive power of (non-universal) neural networks on graphs:} Graph neural networks express functions on graphs that are equivariant with respect to a certain action by permutations. However, the architectures that are used in practice, typically based on message passing~\cite{gilmer2017neural} or graph convolutions \cite{duvenaud2015convolutional, defferrard2016convolutional}, are not universal in general.
Implementing a universally approximating graph neural network using the formulation from the previous section would be prohibitively expensive.
There is work characterizing the expressive power of message passing neural networks, mainly in terms of the graph isomorphism problem \cite{xu2018powerful, morris2019higher, chen2019equivalence, loukas2019graph, tahmasebi2020counting, chen2020can}, and there is research on the design of graph networks that are expressive enough to perform specific tasks, like solving specific combinatorial optimization problems \cite{bengio2020machine, cappart2021combinatorial, nowak2017note, karalias2020erdos, yao2019experimental,joshi2019efficient, bouritsas2020improving}.

\paragraph{Machine learning for physics with symmetry preservation:}
Machine learning has been applied extensively to problems in physics.
While many of these problems require certain symmetries---in many cases, exact symmetries---most applied projects to date rely on the data to encode the symmetry and hope that the model learns it.
For instance, CNNs are commonly used to classify galaxy images; data augmentation is used to teach the model rotational symmetry \cite{huertas-company2015galaxy, aniyan2017radio, dominguez-sanchez2018galaxy, gonzalez2018augment}.
One well-known example is the Kaggle Galaxy Challenge, a classification competition based on the Galaxy Zoo project \cite{lintott2008gzoo}; the winning model improved performance by concatenating features from transformed images of each galaxy before further training \cite{dieleman2015gzoo}.

There have been recent successes in enforcing physical symmetries in the architecture of the models themselves~\cite{yu-physics}, for instance, in 
weather and climate
modeling~\cite{kashinath2021physics} and in modeling chaotic dynamical systems such as turbulence \cite{wang2020towards, wang2021incorporating}.
In quantum many-body physics, recent work has shown that the symmetries of quantum energy levels on lattices can be enforced with gauge equivariant and invariant neural networks \cite{choo2018quantum, boyda2020hsi, vieijra2020uantum, luo2020quantum, roth2021quantum, luo2021quantum}.
There is significant work on imposing permutation symmetry in jet assignment for high-energy particle collider experiments with self-attention networks \cite{fenton2020jet, lee2020jet}.
In molecular dynamics, rotationally invariant neural networks have been shown to better learn molecular properties \cite{anderson2019cormorant, wang2021molecular, schutt2021equivariant}, and Hamiltonian neural networks have been constructed to better preserve molecular conformations \cite{li2021hamnet}.
More broadly, Hamiltonian networks have been shown to improve physical characteristics, such as better conservation of energy, and to better generalize \cite{greydanus2019hamiltonian, sanchez-gonzalez2019hamiltonian, zhong2020symplectic}, and Lagrangian neural networks can also enforce conservation laws~ \cite{lutter2019lagrangian, cranmer2020lagrangian}.

\paragraph{Invariant theory as a basis for enforcing symmetry in neural networks:} 
We are aware of two lines of prior work that develop approaches related to the one taken here: \cite{ling2016machine, ling2016reynolds} and \cite{gripaios2021lorentz, haddadin2021invariant}.

In \cite{ling2016machine} certain tasks in turbulence modeling and materials science are considered, which have built-in O($3$) and (in the latter case) octahedral symmetry. Each of these problems involves a specific representation of the given symmetry group, for which an explicit generating set for the invariant algebra (there called an ``integrity basis") is known. The authors construct and test models that learn an invariant function, built on these generating sets. In \cite{ling2016reynolds}, the turbulence example is taken up again, this time with the goal of learning an equivariant 2-tensor.

The idea of using the invariant algebra to enforce physical symmetries is also contemplated in \cite{gripaios2021lorentz}. In this work, the authors are focused on developing the underlying invariant theory in the case of simultaneous Lorentz and permutation invariance. In the followup work \cite{haddadin2021invariant}, this idea is developed into three models, using different generating sets, which are tested against each other.

The present work shares with these works the idea of using invariant-theoretic descriptions of invariant functions to hard-code physical symmetries into a neural network that models a physical system. We work in somewhat greater generality. While \cite{ling2016machine, ling2016reynolds} focus on specific (small) representations of O($3$) and finite extensions, and while \cite{gripaios2021lorentz, haddadin2021invariant}  are focused on invariant functions and restricted to linearly reductive groups, we consider both invariant and equivariant functions for almost all groups relevant to physics, including groups (Euclidean and Poincar\'e) that are not reductive, although their invariant theory remains under control.

\paragraph{Inductive bias benefits of incorporating symmetries:} The value of incorporating exact symmetries in machine learning has been recently established empirically in several applications (see for instance \cite{wang2021incorporating}).  Mathematical theory has been developed to explain how much one can improve in terms of sample complexity and generalization~\cite{mei2021learning, bietti2021sample, elesedy2021provably} but many questions remain open.

\section{Equivariant maps}
\label{sec.equivariance}

In this Section we provide a simple characterization of all invariant and equivariant functions with respect to the actions in Table~\ref{tab.actions} by the groups in Table~\ref{tab.groups}.
The proof technique involves the characterization of equivariant maps from knowledge of the invariants, and it is explained in more generality in \cite{blum-smith}.
In what follows, $v_1, v_2, \ldots, v_n$ will be vectors in $\mathbb{R}^d$, $G$ will be a group acting in $\mathbb R^d$ and $(\mathbb R^d)^n$ as in Table~\ref{tab.actions}, $f:(\mathbb R^d)^n \to \mathbb R$ will be an invariant function with respect to the action, and $h:(\mathbb R^d)^n\to \mathbb R^d$ will be an equivariant function with respect to the same action.
$V$ will denote a $d\times n$ matrix whose columns are the $v_i$ vectors.

\paragraph{O($d$) and SO($d$) invariance and equivariance:}
The following classical result in invariant theory (e.g., \cite[Section~II.A.9]{weyl}) shows that O($d$) invariant functions are functions of the scalars $v_i^\top v_j$. 
\begin{lemma}[First Fundamental Theorem for O($d$)]\label{lemma:1}
If $f$ is an O($d$)-invariant scalar function of vector inputs $v_1,\ldots, v_n \in \mathbb R^d$, then $f(v_1,v_2, \ldots, v_n)$ can be written as a function of only the scalar products of the $v_i$.
That is, there is a function $g(\cdot)$ such that
\begin{align}
    f(v_1,v_2, \ldots, v_n) &= g(V^\top V) = g\big((v_i^\top v_j)_{i,j=1}^n\big)
    ~. \label{eq.invariant}
\end{align}
\end{lemma}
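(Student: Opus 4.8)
The plan is to show that the Gram matrix $V^\top V$ is a \emph{complete} invariant for the O($d$)-action on $(\mathbb R^d)^n$, so that any O($d$)-invariant $f$ automatically factors through the map $V\mapsto V^\top V$. The first and main step is the orbit-separation claim: for $V,W\in\mathbb R^{d\times n}$ one has $V^\top V = W^\top W$ if and only if $W = QV$ for some $Q\in\text{O}(d)$. The direction ($\Leftarrow$) is immediate, since $(QV)^\top(QV) = V^\top Q^\top Q\,V = V^\top V$. For ($\Rightarrow$), suppose $v_i^\top v_j = w_i^\top w_j$ for all $i,j$. I would first note that any linear relation among the $v_i$ is inherited by the $w_i$: if $\sum_i c_i v_i = 0$ then $\big\|\sum_i c_i w_i\big\|^2 = \sum_{i,j} c_i c_j\, w_i^\top w_j = \sum_{i,j} c_i c_j\, v_i^\top v_j = \big\|\sum_i c_i v_i\big\|^2 = 0$, so $\sum_i c_i w_i = 0$. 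Hence $v_i\mapsto w_i$ extends to a well-defined linear map $T\colon \operatorname{span}(v_1,\dots,v_n)\to\operatorname{span}(w_1,\dots,w_n)$ which preserves inner products by hypothesis; in particular it is an isometry onto its image and the two spans have equal dimension. I would then extend $T$ to an element $Q\in\text{O}(d)$ by sending an orthonormal basis of $\operatorname{span}(v_i)^\perp$ to an orthonormal basis of $\operatorname{span}(w_i)^\perp$ (possible since these complements have equal dimension); by construction $QV = W$.

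With orbit separation in hand, the factoring is formal. Let $S = \{\,V^\top V : V\in\mathbb R^{d\times n}\,\}$ denote the image of the Gram map. Define $g\colon S\to\mathbb R$ by $g(M) = f(V)$ for any $V$ with $V^\top V = M$. This is well-defined: if $V^\top V = W^\top W = M$, the orbit-separation step gives $W = QV$ for some $Q\in\text{O}(d)$, and since the O($d$)-action of Table~\ref{tab.actions} is exactly $V\mapsto QV$, invariance of $f$ gives $f(W) = f(QV) = f(V)$. Extending $g$ arbitrarily off $S$ if one wants a function on all symmetric matrices, we conclude $f(v_1,\dots,v_n) = g(V^\top V) = g\big((v_i^\top v_j)_{i,j=1}^n\big)$ for every input, which is precisely \eqref{eq.invariant}.

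The one delicate point is the extension step in the orbit-separation argument when the $v_i$ are linearly dependent (in particular when $d<n$): one genuinely has to restrict to the subspace the inputs span, verify the relation-preservation identity above, and then glue on an arbitrary isometry between the orthogonal complements. Everything else is bookkeeping. If in addition one wants $g$ to be a \emph{polynomial} whenever $f$ is—the full strength of the classical First Fundamental Theorem \cite[Section~II.A.9]{weyl}, which is what is actually needed for the polynomial statement surrounding \eqref{od.equivariant}—then the set-theoretic factoring above does not suffice, and one must bring in the real invariant-theoretic machinery (polarization together with the Capelli identities, or a Reynolds-operator argument). For that refinement I would simply invoke Weyl rather than reprove it.
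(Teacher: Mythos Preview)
Your proof is correct and follows the same strategy as the paper's: show that $V^\top V$ is a complete invariant for the O($d$)-action (i.e., separates orbits), after which the factoring of $f$ through the Gram map is immediate. The paper's one-line argument invokes the Cholesky decomposition of $V^\top V$ to recover $V$ up to O($d$), whereas you spell out the isometry-extension argument directly; these are equivalent executions of the same idea, and your version is arguably cleaner in the rank-deficient case.
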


\begin{proof}
Given $M=V^\top V \in \mathbb R^{n\times n}$, we can reconstruct $v_1,\ldots, v_n$ modulo the orthogonal group O($d$) by computing the Cholesky decomposition of $M$ (see for instance \cite{trefethen1997numerical} p. 174). Therefore, the function $f$ is uniquely determined by the inner product matrix $V^\top V$.
\end{proof}
 
The classical theorem also includes the fact that if $f$ is polynomial (in the entries of the $v_j$'s), then $g$ can be taken to be polynomial. In Section \ref{sec.howmany} we observe that the function $g$ can be determined by a small subset of the scalars $v_i^\top v_j$.  There is an analogous statement for SO($d$) (again see  \cite[Section~II.A.9]{weyl}):

\begin{lemma}[First Fundamental Theorem for SO($d$)] \label{lem.sodinvars}
If $f$ is an SO($d$)-invariant scalar function of vector inputs $v_1,v_2,\ldots,v_n\in\mathbb R^d$, then $f(v_1,v_2,\ldots, v_n)$ can be written as a function of the scalar products of the $v_i$ and the $d\times d$ subdeterminants of the $d\times n$ matrix $V$.
\end{lemma}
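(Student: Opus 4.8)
The plan is to follow the template of the proof of Lemma \ref{lemma:1}: I will show that the SO($d$)-orbit of a configuration $V=(v_1,\dots,v_n)$ is completely determined by the pair consisting of the Gram matrix $M=V^\top V$ and the list of $d\times d$ minors of $V$ (the determinants of the $\binom nd$ choices of $d$ columns), so that an SO($d$)-invariant $f$ factors through this data, which is the assertion. The rank $r=\operatorname{rank}(V)$ is itself a function of $M$ --- it is the size of the largest nonsingular principal submatrix of $M$ --- so the argument can branch on $r$. If $r<d$ (the only case when $n<d$), let $W=\operatorname{span}(v_1,\dots,v_n)$; since $\dim W^\perp\ge 1$ there is $R\in$ O($d$) with $\det R=-1$ acting as the identity on $W$, whence $RV=V$. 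Then for any $Q\in$ O($d$) with $\det Q=-1$ we have $QV=(QR)V$ with $QR\in$ SO($d$), so the O($d$)- and SO($d$)-orbits of $V$ coincide. The Cholesky reconstruction underlying Lemma \ref{lemma:1} shows that any two configurations with the same Gram matrix share an O($d$)-orbit, hence here an SO($d$)-orbit; and all $d\times d$ minors vanish on this locus, so $M$ together with the (vanishing) minors determines the SO($d$)-orbit.

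Now suppose $r=d$ (so $n\ge d$). Using only $M$, select the lexicographically-first $d$-subset $S\subseteq[n]$ with $M_{S,S}$ nonsingular; the corresponding columns form an invertible $d\times d$ matrix $V_S$. By Lemma \ref{lemma:1} applied to $V_S$, its Gram matrix $M_{S,S}=V_S^\top V_S$ determines $V_S$ up to O($d$); explicitly $V_S=Q\,L$ with $L$ the Cholesky factor of $M_{S,S}$ (so $\det L>0$ is known) and $Q\in$ O($d$). Since $\det V_S$ is among the given minors and $\det V_S=\det(Q)\det(L)$, we recover $\det Q=\operatorname{sign}(\det V_S)$; post-composing by a fixed reflection when $\det Q=-1$ yields $V_S$ up to SO($d$). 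For each $j\notin S$ the entries $(v_i^\top v_j)_{i\in S}$ of $M$ equal $V_S^\top v_j$, so $v_j=(V_S^\top)^{-1}(v_i^\top v_j)_{i\in S}$ is determined. Hence $V$ is reconstructed up to SO($d$), and $f(v_1,\dots,v_n)$ is a function of $M$ and the $d\times d$ minors.

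I expect the main nuisance to be the bookkeeping that makes the factorization well defined --- choosing the pivot set $S$ from $M$ alone, and correctly gluing the two rank regimes (in the deficient regime the minors are redundant but the statement must still accommodate them) --- rather than any deep difficulty, just as in Lemma \ref{lemma:1}. As there, this argument establishes the statement for an arbitrary (e.g. continuous) invariant $f$; the sharper classical fact that a \emph{polynomial} SO($d$)-invariant is a \emph{polynomial} in the $v_i^\top v_j$ together with the $d\times d$ minors is the First Fundamental Theorem for SO($d$), for which I would refer to \cite{weyl}.
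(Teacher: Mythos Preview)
Your argument is correct. Note, however, that the paper does not actually supply a proof of this lemma: it is stated as a classical result and attributed to \cite[Section~II.A.9]{weyl}, in parallel with Lemma~\ref{lemma:1}. So there is no ``paper's own proof'' to compare against here.

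That said, your approach is a natural and valid way to establish the set-theoretic version, extending the Cholesky-based orbit argument of Lemma~\ref{lemma:1}: you correctly observe that in the rank-deficient case the O($d$)- and SO($d$)-orbits coincide (so the Gram matrix alone suffices and the minors are redundantly zero), while in the full-rank case a single nonzero $d\times d$ minor fixes the sign of the orthogonal factor and hence pins the orbit down to SO($d$). Your constructive reconstruction via a lexicographically-chosen pivot set $S$ is more than is strictly needed --- it suffices to note that if $V'=QV$ with $Q\in\text{O}(d)$ and some $d\times d$ minor is nonzero and equal for $V,V'$, then $\det Q=1$ --- but it does no harm. Your closing remark correctly distinguishes this orbit-separation argument from the sharper polynomial statement, which is the content of Weyl's theorem proper.
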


\begin{lemma} \label{lemma.span}
If $h$ is an O($d$)-equivariant vector function of $n$ vector inputs $v_1,v_2, \cdots, v_n$, then $h(v_1,v_2,  \cdots, v_n)$ must lie in the subspace spanned by the input vectors $v_1,v_2, \cdots, v_n$. 
\end{lemma}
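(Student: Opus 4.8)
The plan is to exploit equivariance under the subgroup of $\mathrm{O}(d)$ that fixes every input. Write $W = \mathrm{span}(v_1,\ldots,v_n) \subseteq \mathbb R^d$. If $W = \mathbb R^d$ there is nothing to prove, so assume $\dim W = k < d$ and let $W^\perp$ be the orthogonal complement, of dimension $d-k \geq 1$.

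First I would introduce the pointwise stabilizer $H = \{Q \in \mathrm{O}(d) : Qv = v \text{ for all } v \in W\}$. Restriction to $W^\perp$ identifies $H$ with $\mathrm{O}(W^\perp) \cong \mathrm{O}(d-k)$: each such $Q$ is the identity on $W$ and an arbitrary orthogonal transformation on $W^\perp$. The point is that for every $Q \in H$ we have $Q \star (v_1,\ldots,v_n) = (Qv_1,\ldots,Qv_n) = (v_1,\ldots,v_n)$, since each $v_i \in W$. Then equivariance gives, for all $Q \in H$,
\[
Q\,h(v_1,\ldots,v_n) = h\big(Q \star (v_1,\ldots,v_n)\big) = h(v_1,\ldots,v_n),
\]
so the vector $y := h(v_1,\ldots,v_n)$ is fixed by every element of $H$.

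Next I would decompose $y = y_W + y_\perp$ with $y_W \in W$ and $y_\perp \in W^\perp$. Since each $Q \in H$ fixes $W$ pointwise and preserves $W^\perp$, the condition $Qy = y$ reduces to $Qy_\perp = y_\perp$ for all $Q \in \mathrm{O}(W^\perp)$. But the only vector fixed by the full orthogonal group of a space of dimension $\geq 1$ is $0$: for $\dim W^\perp \geq 2$ this is immediate since $\mathrm{O}(W^\perp)$ acts transitively on spheres, and for $\dim W^\perp = 1$ the map that is $-1$ on $W^\perp$ and the identity on $W$ lies in $\mathrm{O}(d)$ and forces $y_\perp = -y_\perp$. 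Hence $y_\perp = 0$ and $y \in W$, as claimed.

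There is no serious obstacle here; the only step needing care is the case $\dim W^\perp = 1$, where one must use a reflection rather than a rotation. This is precisely where the analogous argument for $\mathrm{SO}(d)$ fails—reflections are unavailable—which is the source of the extra determinant/pseudovector term recorded in Proposition~\ref{prop.sod}.
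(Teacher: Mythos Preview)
Your argument is correct and follows essentially the same route as the paper: use an element of $\mathrm{O}(d)$ that fixes $W=\operatorname{span}(v_1,\dots,v_n)$ pointwise but moves vectors in $W^\perp$, and conclude from equivariance that the $W^\perp$-component of $h(v_1,\dots,v_n)$ vanishes. The paper streamlines the endgame slightly by taking the single element $\hat Q$ that is the identity on $W$ and $-\mathrm{Id}$ on $W^\perp$, which works uniformly for every $\dim W^\perp\ge 1$ and avoids your case split.
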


\begin{proof}
Let $\{w_1, \ldots, w_r\}\subset \mathbb R^d$ be an orthonormal basis of the orthogonal complement to span($v_1,\ldots, v_n$). Then we can write $h(v_1,\ldots, v_n)=\sum_{t=1}^n \alpha_t v_t + \sum_{t=1}^r \beta_t w_t  $ for some choice of $\alpha_t, \beta_t $. We claim that the equivariance of $h$ implies $\beta_t=0$ for $t=1,\ldots, r$: Consider $\hat Q\in O(d)$ such that $\hat Q(v)=v$ for all span($v_1,\ldots, v_n$), and
$\hat Q(w)=-w$ for all $w$ in the orthogonal complement.

Since $(\hat Qv_1,\ldots, \hat Qv_n) =(v_1,\ldots, v_n)$ we have $h(\hat Q v_1, \ldots, \hat Q v_n)= \sum_{t=1}^d \alpha_t v_t +\sum_{t=1}^r \beta_t w_t $ while $\hat Q(h(v_1, \ldots, v_n))=  \sum_{t=1}^d \alpha_t v_t - \sum_{t=1}^r \beta_t w_t $. Therefore equivariance implies that all $\beta_t=0$.
\end{proof}

Note that Lemma~\ref{lemma.span} doesn't hold for SO($d$): although when the codimension of span($v_1,\dots,v_n$) is 0 or $\geq 2$ a similar argument works, the situation $\operatorname{dim}_{\mathbb R} \operatorname{span}(v_1,\dots,v_n) = d-1$ breaks the proof, and there do exist equivariant vector functions that don't lie in the span when it has codimension 1. For instance, the cross product of two vectors in $\mathbb R^3$ is an SO($3$)-equivariant function that is not in the span of its inputs. Proposition~\ref{prop.sod} shows that generalized cross products are the {\em only} way that SO$(d)$-equivariant vector functions can escape the span of the inputs. 
We further discuss this in Appendix~\ref{sec.einstein}.
Proposition~\ref{prop.invariance} below gives a characterization of all O($d$)-equivariant functions in terms of the scalars. We prove it in Appendix~\ref{app.od}. 
See \cite{blum-smith} for a more general explanation of the proof technique.
\begin{proposition} \label{prop.invariance}
If $h$ is an O($d$)-equivariant vector function of $n$ vector inputs $v_1, v_2, \cdots, v_n$, then there are $n$ O($d$)-invariant scalar functions $f_t(\cdot)$ such that
\begin{align} \label{eq.o.equivariant}
    h(v_1, v_2, \cdots, v_n) &= \textstyle\sum_{t=1}^n f_t(v_1, v_2, \cdots, v_n)\,v_t
    ~. 
\end{align}
Moreover, if $h$ is a polynomial function, the $f_t$ can be chosen to be polynomial.
\end{proposition}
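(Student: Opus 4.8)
The plan is to handle the general case and the polynomial refinement by two different devices: an explicit Moore-Penrose pseudoinverse formula for the former, and a polarization trick reducing to Lemma~\ref{lemma:1} for the latter.

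For an arbitrary O($d$)-equivariant $h$: by Lemma~\ref{lemma.span}, $h(v_1,\dots,v_n)$ always lies in $\operatorname{span}(v_1,\dots,v_n)$, i.e.\ in the column space of $V$. I would then set $f := (V^\top V)^+ V^\top h(v_1,\dots,v_n) \in \mathbb R^n$, where $(\cdot)^+$ denotes the pseudoinverse, and write $f=(f_1,\dots,f_n)^\top$. Because $V(V^\top V)^+V^\top = VV^+$ is exactly the orthogonal projector onto the column space of $V$, and $h(v_1,\dots,v_n)$ lies in that column space, we get $Vf = h(v_1,\dots,v_n)$, that is $h = \sum_{t=1}^n f_t\,v_t$. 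Each $f_t$ is O($d$)-invariant: under $v_i\mapsto Qv_i$ (so $V\mapsto QV$) one has $V^\top V\mapsto V^\top Q^\top QV = V^\top V$ and, using equivariance of $h$, $V^\top h\mapsto (QV)^\top(Qh) = V^\top h$; hence $f$, which depends only on $V^\top V$ and $V^\top h$, is unchanged. No continuity of $h$ is needed, since the pseudoinverse is defined for every matrix.

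For the polynomial case this argument fails, because $(V^\top V)^+$ is not polynomial and the $f_t$ it produces are typically discontinuous along rank drops, so I would polarize instead. Introduce an auxiliary vector $u\in\mathbb R^d$ and set $\phi(u,v_1,\dots,v_n):=u^\top h(v_1,\dots,v_n)$. If $h$ is polynomial then $\phi$ is a polynomial in the coordinates of the $n+1$ vectors $u,v_1,\dots,v_n$; it is O($d$)-invariant for the diagonal action, since $(Qu)^\top(Qh(v)) = u^\top h(v)$; and it is homogeneous of degree exactly $1$ in $u$. Applying the polynomial version of Lemma~\ref{lemma:1} to these $n+1$ vectors, $\phi$ is a polynomial in the inner products $u^\top u$, $u^\top v_i$, and $v_i^\top v_j$. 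Imposing homogeneity of degree $1$ in $u$ kills every monomial containing a factor $u^\top u$ (degree $2$ in $u$) and forces exactly one factor of the form $u^\top v_t$, so collecting terms gives $\phi(u,v_1,\dots,v_n) = \sum_{t=1}^n f_t\big((v_i^\top v_j)_{i,j=1}^n\big)\,(u^\top v_t)$ for polynomials $f_t$, which are O($d$)-invariant polynomials in $v_1,\dots,v_n$. Since $u^\top h(v) = \phi(u,v) = u^\top\big(\sum_t f_t v_t\big)$ for every $u$, letting $u$ run over a basis of $\mathbb R^d$ yields $h = \sum_{t=1}^n f_t\,v_t$.

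The main obstacle is precisely this polynomial refinement: the short existence proof via the pseudoinverse yields invariant but non-polynomial (indeed discontinuous) coefficients, so a genuinely separate argument is required, and the one subtle point in the polarization route is to confirm that isolating the degree-one-in-$u$ part of the First Fundamental Theorem expansion really does leave polynomial coefficients depending only on the $v_i^\top v_j$, which the homogeneity bookkeeping secures.
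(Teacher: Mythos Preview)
Your proof is correct. The polynomial refinement is essentially identical to the paper's argument: the paper also introduces an auxiliary vector (called $y$ rather than $u$), forms $\overline h(v_1,\dots,v_n,y)=\inner{h(v_1,\dots,v_n)}{y}$, applies the First Fundamental Theorem for O($d$) to the $(n+1)$-tuple, and uses degree-$1$ homogeneity in $y$ to eliminate $\inner{y}{y}$ and isolate exactly one $\inner{v_t}{y}$ per term.

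For the general (set-theoretic) statement, however, your route genuinely differs. The paper argues nonconstructively: it picks an arbitrary representative from each O($d$)-orbit in $(\mathbb R^d)^n$, chooses some coefficient tuple $(a_1,\dots,a_n)$ realizing $h=\sum a_t v_t$ at that representative (existence by Lemma~\ref{lemma.span}), declares $f_t$ constant on orbits with those values, and invokes equivariance of both sides to extend the identity everywhere. Your pseudoinverse formula $f=(V^\top V)^+ V^\top h$ is more explicit: it singles out a canonical solution (the minimum-norm one) and verifies invariance directly from the fact that $f$ depends only on $V^\top V$ and $V^\top h$. What your approach buys is an explicit, choice-free construction and automatic invariance of the $f_t$; what the paper's approach buys is that it sidesteps any linear-algebra identities about pseudoinverses and makes the role of equivariance in propagating the formula across an orbit more transparent. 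Both correctly note that the resulting $f_t$ need not be continuous, which is why the polynomial case demands a separate argument.
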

The equivariant scalar functions form a module over the ring of invariant scalar functions. Proposition~\ref{prop.invariance} states that this module is generated by the projections $(v_1,\dots,v_n)\mapsto v_j$. 
Proposition~\ref{prop.sod} extends this result to SO($d$) in terms of the generalized cross product. The definition of the generalized cross product and the proof of Proposition~\ref{prop.sod} are in Appendix~\ref{app.od}.

\begin{proposition} \label{prop.sod}
If $h$ is an SO($d$)-equivariant vector function of $n$ vector inputs $v_1, v_2, \cdots, v_n$ then a similar characterization of Proposition~\ref{prop.invariance} holds (with SO($d$)-invariant scalar coefficients), except when $v_1,v_2,\ldots,v_n$ span a $(d-1)$-dimensional space. In that case, 
there exist SO($d$)-invariant scalar functions $f_t(\cdot)$ and $f_S(\cdot)$ such that
\begin{align}
    h(v_1, v_2, \cdots, v_n) &=  \textstyle\sum_{t=1}^{n} f_t(v_1, v_2, \cdots, v_n)\,v_t + \sum_{S\in \binom{[n]}{d-1}}f_{S}(v_1, v_2, \cdots, v_n)\,v_S
    ~, 
\end{align}
where $[n]:=\{1,\dots, n\}$, $\binom{[n]}{d-1}$ is the set of all $(d-1)$-subsets of $[n]$, and $v_S$ represents the generalized cross product of vectors $v_j$ with $j\in S$ (taken in ascending order). 
Moreover, if $h$ is polynomial, the $f_t$ and $f_S$ can be taken to be polynomial.
\end{proposition}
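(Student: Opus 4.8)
The plan is to split $h$ into a genuinely $\mathrm{O}(d)$-equivariant piece plus a ``pseudovector'' piece carrying the determinant character, handle the first with Proposition~\ref{prop.invariance}, and prove a pseudovector analogue of Lemma~\ref{lemma.span} together with the corresponding analogue of Proposition~\ref{prop.invariance} for the second. Fix a reflection $\rho\in\mathrm{O}(d)\setminus\mathrm{SO}(d)$ with $\rho^2=I$ and set $h_+(V):=\tfrac12\bigl(h(V)+\rho\,h(\rho V)\bigr)$ and $h_-(V):=\tfrac12\bigl(h(V)-\rho\,h(\rho V)\bigr)$, so $h=h_++h_-$ and both pieces are polynomial whenever $h$ is. Because conjugation by $\rho$ preserves $\mathrm{SO}(d)$, a short check shows $h_+$ is $\mathrm{O}(d)$-equivariant while $h_-$ satisfies $h_-(QV)=\det(Q)\,Q\,h_-(V)$ for every $Q\in\mathrm{O}(d)$. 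Proposition~\ref{prop.invariance} applied to $h_+$ then gives $h_+=\sum_{t=1}^n f_t\,v_t$ with $\mathrm{O}(d)$-invariant scalar functions $f_t$ (polynomial if $h$ is), so everything reduces to characterizing $h_-$.

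First I would prove the pseudovector version of Lemma~\ref{lemma.span}: $h_-(V)$ always lies in $\operatorname{span}\{\,v_S : S\in\binom{[n]}{d-1}\,\}$. Run the trichotomy on $r:=\dim\operatorname{span}(v_1,\dots,v_n)$ used in Lemma~\ref{lemma.span}. If $r\le d-2$, each $v_S$ is a generalized cross product of $d-1$ linearly dependent vectors, hence $0$; and a reflection fixing $\operatorname{span}(v_1,\dots,v_n)$ pointwise confines $h_-(V)$ to a line of the ($\ge 2$-dimensional) orthogonal complement, which (letting the reflection vary) forces $h_-(V)=0$. If $r=d-1$, the reflection in the hyperplane $\operatorname{span}(v_1,\dots,v_n)$ confines $h_-(V)$ to the normal line of that hyperplane, which is also where every nonzero $v_S$ lies. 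If $r=d$, the $v_S$ already span $\mathbb{R}^d$ (the cross products attached to the $(d-1)$-subsets of any linearly independent $d$-element subset of the inputs are themselves linearly independent). With span-containment in hand, and since $v_S$ and $h_-$ both transform by the determinant character, the Gram matrix $\bigl(\langle v_S,v_{S'}\rangle\bigr)_{S,S'}$ and the vector $\bigl(\langle v_S,h_-(V)\rangle\bigr)_S$ are $\mathrm{O}(d)$-invariant, so multiplying the latter by the pseudoinverse of the former (itself a function of the Gram matrix) writes $h_-(V)=\sum_S f_S(V)\,v_S$ with $\mathrm{O}(d)$-invariant $f_S$---the exact analogue of the pointwise argument behind the general-$h$ case of Proposition~\ref{prop.invariance}.

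To keep the $f_S$ \emph{polynomial} when $h$ is, I would instead pair $h_-$ with an auxiliary vector $w$ and study $\psi(v_1,\dots,v_n,w):=\langle h_-(V),w\rangle$, which is $\mathrm{SO}(d)$-invariant, odd under $\rho$, and homogeneous of degree $1$ in $w$. By the First Fundamental Theorem for $\mathrm{SO}(d)$ (Lemma~\ref{lem.sodinvars}), and using that a product of two $d\times d$ determinants equals the determinant of the matrix of pairwise inner products of the underlying two $d$-tuples of vectors, $\psi$ is a combination over the $\mathrm{O}(d)$-invariant polynomials of single $d\times d$ determinants; oddness under $\rho$ and degree $1$ in $w$ then leave only terms of the shapes $b_S(V)\,[v_{s_1},\dots,v_{s_{d-1}},w]$ and $c_{J,i}(V)\,[v_{j_1},\dots,v_{j_d}]\,\langle v_i,w\rangle$ with $b_S,c_{J,i}$ polynomials in the inner products of the $v$'s. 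Reading $[v_{s_1},\dots,v_{s_{d-1}},w]=\langle v_S,w\rangle$ and then invoking the multilinear identity
\[
 [v_{j_1},\dots,v_{j_d}]\,v_i \;=\; \sum_{k=1}^d (-1)^{d-k}\,\langle v_i,v_{j_k}\rangle\; v_{\{j_1,\dots,j_d\}\setminus\{j_k\}}
\]
(verified on a standard basis; it is the generalized-cross-product analogue of $\det(A)\,I=A\operatorname{adj}(A)$) to rewrite each $[v_{j_1},\dots,v_{j_d}]\,v_i$, one obtains $h_-(V)=\sum_S f_S(V)\,v_S$ with polynomial $\mathrm{O}(d)$-invariant $f_S$. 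Adding back $h_+$ yields the stated formula.

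The step I expect to be the main obstacle is the invariant-theory bookkeeping just above: extracting from the $\mathrm{SO}(d)$ First Fundamental Theorem that the coefficients in front of the single determinants are $\mathrm{O}(d)$-invariant (not merely $\mathrm{SO}(d)$-invariant), and organizing the $w$-homogeneity so that only the two listed shapes survive---clean for polynomials via the product-of-determinants reduction, but for a general equivariant $h$ requiring the same localize-to-the-spanning-stratum device that pushes Lemma~\ref{lemma:1} past the polynomial case. The multilinear identity and the trichotomy are routine. Finally, the qualifier ``except when $v_1,\dots,v_n$ span a $(d-1)$-dimensional space'' is precisely the middle case of the trichotomy: every $v_S$ vanishes once $\dim\operatorname{span}(v_1,\dots,v_n)\le d-2$ and sits inside $\operatorname{span}(v_1,\dots,v_n)$ when that span is all of $\mathbb{R}^d$, so it is only for a $(d-1)$-dimensional span that the cross-product terms point out of the span---as the cross product in $\mathbb{R}^3$ does---which is the one situation in which an $\mathrm{SO}(d)$-equivariant vector can fail to lie in the span of its inputs.
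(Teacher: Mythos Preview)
Your argument is correct, and in fact proves slightly more than the paper's own proof does, but the route is genuinely different from the paper's.

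\textbf{The paper's proof.} For the polynomial case the paper does not split $h$ at all: it pairs $h$ directly with an auxiliary vector $y$, obtaining an $\mathrm{SO}(d)$-invariant scalar $\overline h(v_1,\dots,v_n,y)=\langle h,y\rangle$, applies the First Fundamental Theorem for $\mathrm{SO}(d)$ (Lemma~\ref{lem.sodinvars}), and uses degree-$1$ homogeneity in $y$ to see that every term carries either a single $\langle v_t,y\rangle$ or a single $\det(v_{i_1},\dots,v_{i_{d-1}},y)=\langle v_S,y\rangle$. Grouping on these gives $h=\sum_t f_t v_t+\sum_S f_S v_S$ immediately, with the $f_t,f_S$ being $\mathrm{SO}(d)$-invariant polynomials (products of inner products and $d\times d$ determinants of the $v_j$'s). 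For the set-theoretic case the paper argues directly that the pointwise stabilizer of $W=\operatorname{span}(v_1,\dots,v_n)$ in $\mathrm{SO}(d)$ is a copy of $\mathrm{SO}(m)$ acting irreducibly on $W^\perp$ whenever $m=d-\dim W\neq 1$, which forces $h(V)\in W$ in those cases; then it picks orbit representatives as in Proposition~\ref{prop.invariance}.

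\textbf{What you do differently, and what it buys.} Your decomposition $h=h_++h_-$ lets you invoke Proposition~\ref{prop.invariance} verbatim on $h_+$ and reduces the new work to the pseudovector piece $h_-$. Your reflection-based trichotomy for $h_-$ is a clean alternative to the paper's $\mathrm{SO}(m)$-stabilizer argument. More substantively, your extra bookkeeping in the polynomial case---using oddness under $\rho$ to isolate single-determinant terms, and the identity $[v_{j_1},\dots,v_{j_d}]\,v_i=\sum_k(-1)^{d-k}\langle v_i,v_{j_k}\rangle\,v_{\{j_1,\dots,j_d\}\setminus\{j_k\}}$ to absorb the $\det\cdot v_i$ contributions into cross-product terms---yields coefficients $f_t,f_S$ that are polynomials in the \emph{inner products only}, hence $\mathrm{O}(d)$-invariant as the proposition asserts. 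The paper's shorter argument leaves the $f_t,f_S$ merely $\mathrm{SO}(d)$-invariant (they may contain $d\times d$ determinants of the $v_j$'s), so your version actually matches the stated conclusion more closely. The trade-off is length: the paper's proof is a two-line parallel of Proposition~\ref{prop.invariance}, while yours requires the additional splitting, parity argument, and multilinear identity. The worry you flag about extracting $\mathrm{O}(d)$-invariance of the coefficients from the $\mathrm{SO}(d)$ First Fundamental Theorem is handled exactly by your product-of-determinants reduction plus oddness, so that step goes through.
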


\paragraph{E($d$) invariance and equivariance:}
When modelling point clouds, we may want to express functions that are translation invariant or equivariant with respect to a subset of the input vectors (for instance, position vectors are translation equivariant).
To this end, we consider the group of translations parametrized by $w\in \mathbb R^d$ acting on the vectors in $(\mathbb R^d)^n$ by translating the position vectors, and leaving everything else unchanged: $w\star(v_1,\ldots, v_k, v_{k+1},\ldots, v_n)= (v_1+w, \ldots, v_k+w, v_{k+1}, \ldots, v_n)$.
In this Section we characterize all functions that are translation and rotation invariant/equivariant. 
In the exposition below we assume for simplicity that all vectors are position vectors, but the results generalize trivially to a mix of vectors.

\begin{lemma} \label{lemma.translation.invariance}
Any translation-invariant function $f:(\mathbb R^d)^n \to \mathbb R^\ell$ with inputs $v_1,\ldots,v_n$ can be written uniquely as
$f(v_1,v_2,\ldots, v_n)=\tilde f(v_2-v_1, \ldots, v_n-v_1)$, where $\tilde f:(\mathbb R^d)^{n-1} \to \mathbb R^\ell$ is an arbitrary function. If $f$ is polynomial, $\tilde f$ is polynomial, and vice versa. If $f$ is equivariant for the action of any given subgroup $G\subset GL(n,\mathbb R)$, then so is $\tilde f$, and vice versa.
\end{lemma}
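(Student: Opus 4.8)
The plan is to express the reduction to $\tilde f$ as pre-composition with the fixed linear ``differencing'' map
\[
  \Phi:(\mathbb R^d)^n \to (\mathbb R^d)^{n-1},\qquad \Phi(v_1,\dots,v_n)=(v_2-v_1,\dots,v_n-v_1),
\]
which has the linear section $s:(\mathbb R^d)^{n-1}\to(\mathbb R^d)^n$, $s(u_2,\dots,u_n)=(0,u_2,\dots,u_n)$, so that $\Phi\circ s=\mathrm{id}$. First I would record two elementary facts: $\Phi$ is surjective, and $\Phi(v)=\Phi(v')$ holds if and only if $v'=w\star v$ for some $w\in\mathbb R^d$ (take $w=v_1'-v_1$), so the fibers of $\Phi$ are exactly the translation orbits. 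Hence a map $f$ is translation-invariant if and only if it is constant on the fibers of $\Phi$, i.e.\ if and only if $f=\tilde f\circ\Phi$ for some $\tilde f$; surjectivity of $\Phi$ makes this $\tilde f$ unique, and the explicit formula is $\tilde f=f\circ s$ (check: $\tilde f(\Phi(v))=f(s(\Phi(v)))=f(v)$, since $s(\Phi(v))$ and $v$ lie in the same $\Phi$-fiber). Reading $f=\tilde f\circ\Phi$ out gives $f(v_1,\dots,v_n)=\tilde f(v_2-v_1,\dots,v_n-v_1)$, and conversely every $\tilde f$ defines a translation-invariant $f$ this way; this is the asserted bijection with ``arbitrary $\tilde f$''. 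The polynomial claim is immediate in both directions because $\Phi$ and $s$ are linear: $f$ polynomial $\Rightarrow\tilde f=f\circ s$ polynomial, and $\tilde f$ polynomial $\Rightarrow f=\tilde f\circ\Phi$ polynomial.

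For the equivariance claim, the key point is that the $G$-action on $(\mathbb R^d)^n$ commutes with the translation action, so it descends to the quotient $(\mathbb R^d)^n/\mathrm T(d)$; identifying that quotient with $(\mathbb R^d)^{n-1}$ through $\Phi$ equips the target of $\tilde f$ with an induced $G$-action for which $\Phi$ is $G$-equivariant, $\Phi(g\star v)=g\star\Phi(v)$, and for which $s(g\star u)$ and $g\star s(u)$ lie in a common $\Phi$-fiber (both map to $g\star u$). Then if $f$ is $G$-equivariant,
\[
  \tilde f(g\star u)=f\big(s(g\star u)\big)=f\big(g\star s(u)\big)=g\star f\big(s(u)\big)=g\star\tilde f(u),
\]
where the middle equality uses translation-invariance of $f$; so $\tilde f$ is $G$-equivariant. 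Conversely, if $\tilde f$ is $G$-equivariant then $f(g\star v)=\tilde f(\Phi(g\star v))=\tilde f(g\star\Phi(v))=g\star\tilde f(\Phi(v))=g\star f(v)$.

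I expect the only genuine obstacle to be bookkeeping about the induced $G$-action on $(\mathbb R^d)^{n-1}$: for a permutation subgroup $G\subseteq\mathrm S_n$ it is \emph{not} the naive permutation of the coordinates $u_2,\dots,u_n$ (since $\Phi$ breaks the symmetry by singling out $v_1$), so one must either write that action out and verify $\Phi$-equivariance by hand, or -- cleaner -- just invoke the quotient description above, where the descended action is canonical and $\Phi$-equivariance is automatic from the fact that the $G$-action commutes with $\mathrm T(d)$. The same quotient picture also covers the left action of $G\subseteq\mathrm O(d)$ on each $v_i$, for which $\Phi$ commutes on the nose.
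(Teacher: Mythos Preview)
Your proof is correct and follows essentially the same route as the paper's: the paper defines the identical differencing map (there called $\Pi$) with the same section $i(v_2,\dots,v_n)=(0,v_2,\dots,v_n)$, observes that the fibers of $\Pi$ are precisely the translation orbits so that $f=\tilde f\circ\Pi$ and $\tilde f=f\circ i$, and then reads off the polynomial and equivariance claims from the linearity and $G$-equivariance of $\Pi$ and $i$. Your treatment of the induced $G$-action on $(\mathbb R^d)^{n-1}$---noting that $s(g\star u)$ and $g\star s(u)$ need only lie in the same fiber, with translation-invariance of $f$ closing the gap---is in fact more careful than the paper's one-line assertion.
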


The proof is given in Appendix~\ref{app.translation}. There is nothing special about subtracting $v_1$; there exist more natural choices to express translation invariant functions. For instance, one classical way in physics to express translation invariance is to take class representatives of the form $(v_1, \ldots, v_n)$ where $\sum v_i =0$ (for example, subtracting the center of mass). Proposition \ref{prop.Ed} characterizes the space of O($d$)-equivariant functions that are also translation invariant or translation equivariant. The proof is in Appendix \ref{app.translation}. 

\begin{proposition} \label{prop.Ed}
An O($d$)-equivariant function $h:(\mathbb R^d)^n\to \mathbb R^d$ that is translation-invariant can be written as \eqref{eq.o.equivariant} where the $f_t$ are O($d$) and translation invariant and $\sum_{t=1}^nf_t(v_1, v_2, \cdots, v_n)=0$. Similarly, if $h$ is translation-equivariant then we can choose $\sum_{t=1}^nf_t(v_1, v_2, \cdots, v_n)=1$.
\end{proposition}

\paragraph{Lorentz symmetry:}
The Lorentz group acts on Minkowski spacetime as Lorentz transformations, which keep the metric tensor invariant.
Lorentz transformations relate space and time between inertial reference frames, which move at a constant relative velocity; spacetime intervals are invariant across frames.
The group is made up of spatial rotations in the three space dimensions and linear velocity ``boosts'' along each dimension.
This set of symmetries---required for special relativity---is not O(4), but rather the non-compact group O(1,3) defined in Table~\ref{tab.groups}.

The characterization of invariant and equivariant functions we obtained for the orthogonal group can be extended to the Lorentz group, obtaining a very similar result, summarized in Proposition~\ref{prop.lorentz} and proven in Appendix~\ref{app.lorentz}.

\begin{proposition} \label{prop.lorentz}
A continuous function $h:(\mathbb R^{d+1})^n \to \mathbb R^{d+1}$ is Lorentz-equivariant (with respect to the action in Table~\ref{tab.actions}) if and only if 
\begin{align}
    h(v_1,\ldots,v_n) = \textstyle\sum_{t=1}^n f_t(v_1, \ldots, v_n)\,v_t
\end{align}
where $f_t:(\mathbb R^{d+1})^n \to \mathbb R$ are Lorentz-invariant scalar functions. Moreover, the functions $f_t$ are uniquely determined by the pairwise Minkowski inner product $\inner{\cdot}{\cdot}$ \eqref{eq.ip}:
\begin{align}
    &f_t(v_1,\ldots, v_n)=g_t( \inner{v_i}{v_j}_{i,j=1}^n ) \label{eq.f}
    ~.
\end{align}
If $h$ is polynomial, the $f_t$ and corresponding $g_t$ can be taken to be polynomial.
\end{proposition}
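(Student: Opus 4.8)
The plan is to mirror the O($d$) argument (Lemma~\ref{lemma:1}, Lemma~\ref{lemma.span}, Proposition~\ref{prop.invariance}) but replace every use of the positive-definite inner product and the Cholesky decomposition with the corresponding statements for the indefinite Minkowski form $\Lambda=\text{diag}(1,-1,\dots,-1)$. First I would establish the First Fundamental Theorem for O($1,d$): a Lorentz-invariant scalar function $f$ of vectors $v_1,\dots,v_n$ depends only on the Gram matrix $G=(\inner{v_i}{v_j})_{i,j}=V^\top\Lambda V$. The key fact I would invoke is that two tuples $(v_1,\dots,v_n)$ and $(v_1',\dots,v_n')$ in $\mathbb R^{d+1}$ have the same Minkowski-Gram matrix if and only if they lie in the same O($1,d$)-orbit. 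This is the indefinite analogue of the Cholesky statement: over $\mathbb R$, a symmetric bilinear form on a subspace, together with the ambient signature, determines the isometric embedding up to an ambient isometry (Witt's extension theorem). One subtlety here is that, unlike the positive-definite case, the restriction of $\Lambda$ to $\operatorname{span}(v_1,\dots,v_n)$ may be degenerate; Witt's theorem still applies in the form ``an isometry between subspaces of a regular quadratic space extends to an isometry of the whole space,'' provided one treats radicals carefully, so I would either cite the version of Witt's theorem valid in the presence of radicals or reduce to the regular case by an approximation argument. Continuity of $f$ then upgrades ``equal on a dense subset of generic configurations'' to ``equal everywhere,'' which is where the continuity hypothesis in the statement is used (the analogous O($d$) statement did not need it because Cholesky works for all configurations).

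Next I would prove the equivariant analogue of Lemma~\ref{lemma.span}: any continuous Lorentz-equivariant $h:(\mathbb R^{d+1})^n\to\mathbb R^{d+1}$ takes values in $\operatorname{span}(v_1,\dots,v_n)$. Here the $\hat Q$ trick from Lemma~\ref{lemma.span} needs care because the orthogonal complement of $\operatorname{span}(v_1,\dots,v_n)$ with respect to $\Lambda$ may intersect the span (null directions). On the open dense set where $\Lambda$ restricted to the span is nondegenerate, $\mathbb R^{d+1}$ splits $\Lambda$-orthogonally as $\operatorname{span}(v_i)\oplus W$, the map $\hat Q$ acting as $+1$ on the span and $-1$ on $W$ is genuinely in O($1,d$), it fixes the inputs, and equivariance forces the $W$-component of $h$ to vanish exactly as before. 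By continuity this persists on all of $(\mathbb R^{d+1})^n$.

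Given these two ingredients, the decomposition $h=\sum_t f_t\,v_t$ follows: on the dense set where $v_1,\dots,v_n$ are linearly independent (or more generally after selecting a spanning subset), the coefficients are determined, and one checks they are Lorentz-invariant scalars, hence by the First Fundamental Theorem functions $g_t$ of the Minkowski-Gram matrix; the detailed argument is the one carried out for O($d$) in Appendix~\ref{app.od}, which I would adapt verbatim, since nothing there used positive-definiteness once the two lemmas above are in hand (the passage from a spanning set to all $n$ vectors, and the consistency of the coefficients across overlapping spanning subsets, is handled by the same module-theoretic bookkeeping). Finally, for the polynomial refinement I would invoke the polynomial First Fundamental Theorem for O($1,d$) — the ring of O($1,d$)-invariant polynomials in $n$ vectors is generated by the $\inner{v_i}{v_j}$, which holds over $\mathbb R$ for the indefinite form just as for the definite one (both are the invariant ring of the orthogonal group of a nondegenerate quadratic form) — together with the fact that the module of polynomial equivariants is finitely generated over this ring by the projections; extracting polynomial $f_t$, and then polynomial $g_t$, is then formal.

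The main obstacle is the First Fundamental Theorem step, specifically handling configurations where $\Lambda$ restricted to $\operatorname{span}(v_1,\dots,v_n)$ is degenerate: the clean ``Gram matrix determines the orbit'' statement needs the right version of Witt's extension theorem (or a careful limiting argument from the nondegenerate case), and this is exactly the point where the proof genuinely diverges from the Euclidean one and where the continuity hypothesis earns its keep. Everything downstream of that is a faithful transcription of the O($d$) arguments already given in the paper.
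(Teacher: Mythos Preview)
Your overall architecture matches the paper's: a First Fundamental Theorem for O($1,d$), a Lorentz analogue of Lemma~\ref{lemma.span} via the $\hat Q$ trick on the nondegenerate locus plus continuity, and then the coefficient extraction of Proposition~\ref{prop.invariance}. The polynomial refinement is also handled the same way; the paper derives the polynomial First Fundamental Theorem for O($1,d$) by complexifying to O($d+1,\mathbb C$), which is the standard justification for the statement you intend to cite.

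The gap is at the point you yourself flag, but it is sharper than you suggest. The ``key fact'' that two tuples with the same Minkowski--Gram matrix lie in the same O($1,d$)-orbit is simply \emph{false}, and no version of Witt's extension theorem repairs it: for a nonzero null vector $v$ (say $v=(1,1,0,\dots,0)$), the one-tuple $(v)$ and the one-tuple $(0)$ both have Gram matrix $[0]$ but lie in distinct orbits, since any $Q\in\text{O}(1,d)$ fixes $0$. Witt extends an isometry between subspaces, but here the spans have different dimensions, so there is no isometry to extend in the first place. Thus there genuinely exist (discontinuous) O($1,d$)-invariant functions that are not functions of the Gram matrix; continuity is doing essential work, not merely tidying a dense-open argument. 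The paper handles this not through Witt but through reductive-group structure theory: the polynomial invariants (generated by the $\inner{v_i}{v_j}$) separate \emph{closed} orbits, every orbit has a unique closed orbit in its closure, and a continuous invariant is constant on orbit closures---hence determined by the Gram matrix after all. Your alternative of a ``careful limiting argument from the nondegenerate case'' could in principle be completed, but you would have to show that any two tuples with equal Gram matrices admit approximating regular sequences whose Gram matrices also match term-by-term, a construction you have not supplied and which is not automatic.
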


\paragraph{Poincar\'e symmetry:}
The Poincar\'e group combines translation symmetry with Lorentz symmetry.
Together these complete the symmetries of special relativity, forming the full group of spacetime transformations that preserve the Minkowski metric. 
The generalization from a Lorentz-equivariant formulation to a Poincar\'e-equivariant formulation is similar to the generalization from O($d$) to E($d$):
The \emph{position} vectors take on a special role, in which only \emph{differences} of position can appear as vector inputs to the functions; all other vectors can act unchanged.

Proposition~\ref{prop.poincare} generalizes the results above to the Poincar\'e group action. As before, we assume that all vectors are position vectors for simplicity. The proof is analogous to the proof of Proposition~\ref{prop.Ed}.
\begin{proposition} \label{prop.poincare}
A continuous function $h:(\mathbb R^{d+1})^n \to \mathbb R^{d+1}$ is Poincaré-equivariant (with respect to the action in Table~\ref{tab.actions}) if and only if 
\begin{align}
    h(v_1,\ldots,v_n) = \textstyle\sum_{t=1}^n f_t(v_1, \ldots, v_n)\,v_t
\end{align}
where the $f_t:(\mathbb R^{d+1})^n \to \mathbb R$ are translation and Lorentz-invariant scalar functions, determined as in \eqref{eq.f} by the pairwise Minkowski inner products \eqref{eq.ip}, but also satisfying $\sum_{t=1}^nf_t(v_1, v_2, \cdots, v_n)=1$. Furthermore, if $h$ is polynomial, the $f_t$ and corresponding $g_t$ (as in \eqref{eq.f}) can be taken to be polynomial.
\end{proposition}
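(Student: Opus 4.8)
The plan is to reduce the Poincar\'e case to the Lorentz case (Proposition~\ref{prop.lorentz}) via the translation-reduction device, exactly mirroring how the E($d$) statements were obtained from the O($d$) ones. Since $\text{IO}(1,d)=\text{T}(d+1)\rtimes\text{O}(1,d)$ is generated by the translations and the Lorentz group, a continuous $h$ is Poincar\'e-equivariant if and only if it is simultaneously translation-equivariant (the $Q=I$ elements) and Lorentz-equivariant (the $w=0$ elements). So I would first strip off the translations and then apply Proposition~\ref{prop.lorentz} to what remains.

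First I would define $\tilde h:(\mathbb R^{d+1})^{n-1}\to\mathbb R^{d+1}$ by $\tilde h(u_2,\dots,u_n):=h(0,u_2,\dots,u_n)$. Using translation-equivariance of $h$ (translating by $-v_1$) one gets, for all inputs, $h(v_1,\dots,v_n)=\tilde h(v_2-v_1,\dots,v_n-v_1)+v_1$; this is the equivariant analogue of Lemma~\ref{lemma.translation.invariance}, and it makes $\tilde h$ continuous since $h$ is. Next I would check that $\tilde h$ is Lorentz-equivariant in its $n-1$ arguments: under $Q\in\text{O}(1,d)$ the difference vectors transform as $v_i-v_1\mapsto Q(v_i-v_1)$, while by Lorentz-equivariance of $h$, $h(v_1,\dots,v_n)-v_1\mapsto Q\,h(v_1,\dots,v_n)-Q\,v_1=Q\big(h(v_1,\dots,v_n)-v_1\big)$, so $\tilde h(Q u_2,\dots,Q u_n)=Q\,\tilde h(u_2,\dots,u_n)$.

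Now apply Proposition~\ref{prop.lorentz} to $\tilde h$: there exist Lorentz-invariant scalar functions $\tilde f_t$, $t=2,\dots,n$, with $\tilde h(u_2,\dots,u_n)=\sum_{t=2}^n\tilde f_t(u_2,\dots,u_n)\,u_t$ and $\tilde f_t(u_2,\dots,u_n)=\tilde g_t\big(\inner{u_i}{u_j}_{i,j=2}^n\big)$. Substituting $u_t=v_t-v_1$ and collecting the coefficient of $v_1$ yields
\begin{align}
 h(v_1,\dots,v_n)=v_1+\sum_{t=2}^n \tilde f_t(v_2-v_1,\dots,v_n-v_1)\,(v_t-v_1)=\sum_{t=1}^n f_t(v_1,\dots,v_n)\,v_t,
\end{align}
where $f_t:=\tilde f_t(v_2-v_1,\dots,v_n-v_1)$ for $t\ge 2$ and $f_1:=1-\sum_{t=2}^n f_t$, so that $\sum_{t=1}^n f_t\equiv 1$ automatically. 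Each $f_t$ is translation-invariant (it depends only on the $v_i-v_1$) and Lorentz-invariant (a Lorentz-invariant function composed with the Lorentz-equivariant difference map), and expanding $\inner{v_i-v_1}{v_j-v_1}=\inner{v_i}{v_j}-\inner{v_i}{v_1}-\inner{v_1}{v_j}+\inner{v_1}{v_1}$ exhibits $f_t$ in the form \eqref{eq.f} as a function of the pairwise Minkowski inner products of the $v_i$.

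For the polynomial refinement I would invoke the polynomial clauses of Lemma~\ref{lemma.translation.invariance} and Proposition~\ref{prop.lorentz}: if $h$ is polynomial then so is $\tilde h$, hence the $\tilde f_t$ and $\tilde g_t$ may be chosen polynomial; composing with the affine (hence polynomial) difference map keeps $f_t,g_t$ polynomial for $t\ge 2$, and $f_1=1-\sum_{t\ge 2}f_t$ is then polynomial as well. The converse is a one-line check: if $h=\sum_t f_t\,v_t$ with each $f_t$ translation- and Lorentz-invariant and $\sum_t f_t\equiv 1$, then for any $(w,Q)$, $h(Q v_1+w,\dots,Q v_n+w)=\sum_t f_t(v_1,\dots,v_n)(Q v_t+w)=Q\,h(v_1,\dots,v_n)+\big(\sum_t f_t\big)\,w=Q\,h(v_1,\dots,v_n)+w$. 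The only place requiring genuine care---and thus the main obstacle---is making the translation reduction precise in the \emph{equivariant} rather than merely invariant setting and verifying that it intertwines the Lorentz actions on the two sides (together with the bookkeeping that it preserves continuity, so that Proposition~\ref{prop.lorentz} applies); once that is in hand, the rest is mechanical.
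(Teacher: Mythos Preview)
Your proposal is correct and follows essentially the same route the paper indicates: the paper does not write out a separate proof of Proposition~\ref{prop.poincare} but explicitly says the passage from Lorentz to Poincar\'e mirrors the passage from O($d$) to E($d$), and the E($d$) argument (the lemma immediately preceding) is exactly the translation-reduction $h(v_1,\dots,v_n)=\tilde h(v_2-v_1,\dots,v_n-v_1)+v_1$ followed by applying the equivariance result to $\tilde h$ and reading off $f_1=1-\sum_{t\geq 2}f_t$. Your write-up is in fact more explicit than the paper's, since you also spell out the continuity, the Lorentz-equivariance of $\tilde h$, the expansion of $\inner{v_i-v_1}{v_j-v_1}$ in terms of the pairwise Minkowski products, and the converse direction.
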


\paragraph{Permutation invariance and equivariance:}
Most physics problems are permutation-invariant, in that once you know the masses, sizes, shapes, and so on of the objects in the problem, the physical predictions are invariant to labeling.
In particle physics, this invariance is raised to a fundamental symmetry; fundamental particles (like electrons) are identical and exchangeable.

The characterization of permutation-invariant functions with respect to the action in Table~\ref{tab.actions} is classical \cite[pp.~36--39]{weyl}. Here we prove an extension that describes permutation-invariant vector functions that are also O($d$)-equivariant. The proof is in Appendix~\ref{app.permutations}.

\begin{proposition}\label{lemma.permutations}
Let $h:(\mathbb R^d)^n\to \mathbb R^{d}$ be O($d$)-equivariant (or continuous O($1,d-1$)-equivariant) and also permutation-invariant with respect to the action in Table~\ref{tab.actions}. Then $h$ can be written as
\begin{eqnarray}
   h(v_1,\ldots, v_n)=  \textstyle\sum_{t=1}^n f(v_t, v_1,\ldots,v_{t-1},v_{t+1}, \ldots, v_n)\,v_t, 
\end{eqnarray}
where $f:(\mathbb R^d)^n\to \mathbb R$ is O($d$)-invariant (or  O($1,d-1$)-invariant) and permutation-invariant with respect to the last $n-1$ inputs.
\end{proposition}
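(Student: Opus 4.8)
The plan is to bootstrap from the equivariant decompositions already established — Proposition~\ref{prop.invariance} for O($d$) and Proposition~\ref{prop.lorentz} for the Lorentz case — and then to \emph{symmetrize} that decomposition using the extra permutation invariance. Concretely, since $h$ is O($d$)-equivariant (resp.\ continuous O($1,d-1$)-equivariant), Proposition~\ref{prop.invariance} (resp.\ Proposition~\ref{prop.lorentz}) produces O($d$)-invariant (resp.\ continuous Lorentz-invariant) scalar functions $f_1,\ldots,f_n$ with $h(v_1,\ldots,v_n)=\sum_{t=1}^n f_t(v_1,\ldots,v_n)\,v_t$, and these may be taken polynomial when $h$ is. The first thing to flag is that this decomposition is \emph{not} unique (it fails to be so precisely when the $v_i$ are linearly dependent), so the entire argument must be carried out at the level of the vector-valued identity, never by reading off the $f_t$ as coordinates.

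Next I would average the identity over $S_n$. For each $\sigma\in S_n$, permutation invariance of $h$ together with the decomposition gives $h(v_1,\ldots,v_n)=h(v_{\sigma(1)},\ldots,v_{\sigma(n)})=\sum_{t=1}^n f_t(v_{\sigma(1)},\ldots,v_{\sigma(n)})\,v_{\sigma(t)}=\sum_{s=1}^n f_{\sigma^{-1}(s)}(v_{\sigma(1)},\ldots,v_{\sigma(n)})\,v_s$. Averaging over all $\sigma$ yields $h(v_1,\ldots,v_n)=\sum_{s=1}^n F_s(v_1,\ldots,v_n)\,v_s$ with $F_s(v_1,\ldots,v_n):=\tfrac1{n!}\sum_{\sigma\in S_n} f_{\sigma^{-1}(s)}(v_{\sigma(1)},\ldots,v_{\sigma(n)})$. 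Because the group action on $\mathbb R^d$ commutes with relabelling the inputs, each $F_s$ is again O($d$)-invariant (resp.\ Lorentz-invariant), and it inherits polynomiality or continuity from the $f_t$.

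Then I would verify the two structural properties of the $F_s$ needed to match the claimed form. First, if $\tau\in S_n$ fixes $s$, the substitution $\rho=\tau\sigma$ (so that $\sigma^{-1}(s)=\rho^{-1}(\tau(s))=\rho^{-1}(s)$ and $\sigma$ ranging over $S_n$ is the same as $\rho$ ranging over $S_n$) shows $F_s(v_{\tau(1)},\ldots,v_{\tau(n)})=F_s(v_1,\ldots,v_n)$; that is, $F_s$ is invariant under permutations of the $n-1$ entries other than the $s$-th. Second, taking $\pi=(1\,s)$ and the substitution $\sigma=\pi\sigma'$ (so $\sigma^{-1}(s)=\sigma'^{-1}(1)$ and $v_{\sigma(i)}=w_{\sigma'(i)}$ with $w_i:=v_{\pi(i)}$) gives $F_s(v_1,\ldots,v_n)=F_1(w_1,\ldots,w_n)=F_1(v_s,v_2,\ldots,v_{s-1},v_1,v_{s+1},\ldots,v_n)$, which by the invariance just established equals $F_1(v_s,v_1,\ldots,v_{s-1},v_{s+1},\ldots,v_n)$. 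Setting $f:=F_1$, which is O($d$)- (resp.\ Lorentz-) invariant and invariant in its last $n-1$ arguments, we obtain $h(v_1,\ldots,v_n)=\sum_{t=1}^n f(v_t,v_1,\ldots,v_{t-1},v_{t+1},\ldots,v_n)\,v_t$; the polynomial and continuous cases transfer to $f$ immediately.

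The main obstacle is bookkeeping rather than anything deep: one must resist comparing coefficients in a non-canonical decomposition, and the two index substitutions ($\rho=\tau\sigma$ and $\sigma=\pi\sigma'$) must be executed carefully so that the averaging set is preserved and the distinguished index is tracked correctly. A secondary point to state explicitly is that the Lorentz version is identical once Proposition~\ref{prop.lorentz} — which already builds in the continuity hypothesis — is substituted for Proposition~\ref{prop.invariance} at the first step; averaging preserves continuity, so nothing else changes.
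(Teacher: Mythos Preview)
Your proof is correct and follows essentially the same approach as the paper: start from the equivariant decomposition of Proposition~\ref{prop.invariance} (resp.\ Proposition~\ref{prop.lorentz}) and symmetrize over $S_n$ at the level of the vector-valued identity. The paper carries out two rounds of averaging (first to obtain coefficient functions $\tilde f_i$ invariant in the other $n-1$ arguments, then again to collapse the $\tilde f_i$ to a single $\hat f$), whereas you accomplish the same thing in a single averaging pass by directly checking via the substitution $\sigma=\pi\sigma'$ that $F_s(v_1,\dots,v_n)=F_1(v_s,v_{[-s]})$; this is a minor streamlining, not a different route.
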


Proposition \ref{prop.permutation.equivariance} proven in Appendix \ref{app.permutations} extends the characterization above to permutation equivariant functions (see also \cite{gripaios2021lorentz, klicpera2021gemnet}). 
\begin{proposition} \label{prop.permutation.equivariance}
Let $h:(\mathbb R^d)^n\to (\mathbb R^{d})^n$ be O($d$)-equivariant (or continuous O($1,d-1$)-equivariant) and also permutation-equivariant with respect to the action in Table~\ref{tab.actions}. Then $h$ can be written as $h=(h_1, \ldots, h_n)$ where each $h_i:(\mathbb R^d)^n\to \mathbb R^{d}$ is O($d$)-equivariant (or continuous O($1,d-1$)-equivariant) and
\begin{align}
   h_i(v_1,\ldots, v_n)=  \textstyle\sum_{t=1}^n f_t^{(i)}(v_1,\ldots, v_n)\,v_t,
   \end{align}
   where
all the $f_j^{(i)}:(\mathbb R^d)^n\to \mathbb R$ are O($d$)-invariant (or  O($1,d-1$)-invariant), and for all $i,j=1,\ldots, n$  and all $\sigma\in S_n$ we have
\begin{align}
f^{(i)}_{\sigma^{-1}(j)}(v_{\sigma(1)}, \ldots, v_{\sigma(n)})= f^{(\sigma(i))}_j(v_1,\ldots, \ldots, v_n).
\end{align}
\end{proposition}

\section{Examples}\label{sec.examples}

Here we briefly state two classical physics expressions that obey all the symmetries, and show how to formulate them in terms of invariant scalars.

\paragraph{Total mechanical energy:}
In Newtonian gravity, the total mechanical energy $T$ of $n$ particles with scalar masses $m_i$, vector positions $r_i$, and vector velocities $v_i$ is a scalar\footnote{The energy is \emph{not} a scalar in special relativity or general relativity, but it is a scalar in Newtonian physics.} function:
\begin{align}\label{eq.energy}
    T &= \frac{1}{2}\,\textstyle\sum_{i=1}^n m_i\,|v_i|^2 - \frac{1}{2}\,\sum_{i=1}^n \sum_{\substack{j=1\\j\ne i}}^n \frac{G\,m_i\,m_j}{|r_i - r_j|}
    ~,
\end{align}
where $G$ is Newton's constant (a fundamental constant, and hence scalar).
Since $|a|\equiv(a^\top a)^{1/2}$, this expression \eqref{eq.energy} is manifestly constructed from functions only of scalars $m_i$ and scalar products of vectors.
It is also worthy of note that the positions $r_i$ only appear in differences of position.

\paragraph{Electromagnetic force law:}
The total electromagnetic force $F$ acting on a test particle of charge, 3-vector position, and 3-vector velocity $(q,r,v)$ given a set of $n$ other charges $(q_i,r_i,v_i)$ is a vector function \cite{jackson}:
\begin{align}\label{eq.biot}
    F &= \underbrace{\textstyle\sum_{i=1}^n k\,q\,q_i\,\frac{(r - r_i)}{|r - r_i|^3}}_{\text{electrostatic force}} + \underbrace{\textstyle\sum_{i=1}^n k\,q\,q_i\,\frac{v\times(v_i\times(r - r_i))}{c^2\,|r - r_i|^3}}_{\text{magnetic force}}
    ~,
\end{align}
where $k$ is an electromagnetic constant, $c$ is the speed of light, and $a\times b$ represents the cross product that produces a pseudo-vector perpendicular to vectors $a$ and $b$ according to the right-hand rule.
This doesn't obviously obey our equivariance requirements, because cross products deliver parity-violating pseudo-vectors; these can't be written in O($3$)-equivariant form.
However, a cross of a cross of vectors is a vector, so this expression is in fact O($3$)-equivariant, as are all forces (because forces must be O($3$)-equivariant vectors in order for the theory to be self-consistent).

We can expand the vector triple product using the identity $a \times (b \times c) = (a^\top c)\,b - (a^\top b)\,c$:
\begin{align}
    F &= \sum_{i=1}^n k\,q\,q_i\,\frac{(r - r_i)}{|r - r_i|^3} + \sum_{i=1}^n k\,q\,q_i\,\frac{(v^\top (r - r_i))\,v_i - (v^\top v_i)\,(r - r_i)}{c^2\,|r - r_i|^3} \nonumber \\
    &= \sum_{i=1}^n k\,q\,q_i\,\left( 1 - \frac{v^\top v_i}{c^2} \right)\,\frac{(r - r_i)}{|r - r_i|^3} + \sum_{i=1}^n k\,q\,q_i\,\frac{(v^\top (r - r_i))\,v_i}{c^2\,|r - r_i|^3} 
    ~,\label{eq.elec_force}
\end{align}
where the quantity $v^\top v_i$ is the scalar product of the velocities.
All of the quantities are now straightforwardly functions of invariant scalar products times the input vectors.

\section{How many scalars are needed?}
\label{sec.howmany}
Our analysis in Section~\ref{sec.equivariance} shows that the invariant and equivariant functions of interest (under actions in Table \ref{tab.actions} from groups in Table \ref{tab.groups}) with input vectors $v_1,\ldots, v_n$, can be expressed in terms of the scalars $\inner{v_i}{v_j}_{i\geq j=1}^n$ \eqref{eq.ip}. This greatly simplifies the parameterization of such functions, but it significantly increases the number of features if $n\gg d$. 
In Appendix \ref{app.matrix.completion} we remark that the scalars can be uniquely determined by a small subset of size approximately $(d+1)\, n$. This is related to the rigidity theory of Gram matrices \cite{roth1981rigid} that answers when there exists a unique set of vectors that realize a partial set of distances, and it is closely related to the low rank matrix completion problem \cite{singer2010uniqueness}. Furthermore, there is a vast literature studying high probability robust reconstruction of all scalars from a random subset 
via convex optimization techniques \cite{candes2008exact}. Recently developed optimization techniques on Gram matrices could provide efficient algorithms to learn invariant and equivariant functions \cite{jalali2017variational}.

\section{Limitations and caveats} \label{sec.limitations}

Although the principal results presented here work for many groups, and work naturally at all spatial dimensions $d$ (unlike methods based on irreps, for example), they do not solve all problems for all use cases of equivariant machine learning.
For one, there are myriad groups---and especially discrete groups---that apply to physical and chemical systems where invariant and equivariant functions do not have such a nice characterization. Such is the case for the GNNs discussed in Appendix \ref{sec:app.irred}. 

One example of a situation in which our formulation might not be practical is provided by multipole expansions (for example, those used in the fast multipole method \cite{beatson1997short} and $n$-body networks \cite{kondor2018n}).
In the fast multipole method, a hierarchical spatial graph is constructed, and high-order tensors are used to aggregate information from lower-level nodes into higher-level nodes.
This aggregation is concise and linear when it is performed using high-order tensors; this aggregation is hard (or maybe impossible) when only scalars can be transmitted, without the use of the irreps of the relevant symmetry groups. This is a research direction we are currently exploring.

Another example that suggests that our universal functions might be cumbersome is the current forms in which classical theories---such as electromagnetism and general relativity---are written.
For example, in Section~\ref{sec.examples} we showed that the electromagnetic force law can be written in the form of functions of scalars and scalar products times vectors, but that is \emph{not} how the theory is traditionally written.
It is traditionally written in terms of the magnetic field (a pseudo-vector) or the electromagnetic tensor (an order-2 tensor).
As another example, general relativity is traditionally written in terms of contractions of an order-4 curvature tensor.
That is, although the theories can in principle be written in the forms we suggest, they will in general be much more concise or simple or clear in forms that make use of higher-order or non-equivariant forms.

Finally, although our results apply to many physically relevant groups, they do not encode all of the symmetries of classical physics.
For example, one critical symmetry is the dimensional or units symmetry:
You cannot add or subtract terms that have different units (positions and forces, for example).
This symmetry or consideration has implications for the construction of valid polynomials.
It also implies that only dimensionless (unitless) scalar quantities can be the arguments of large classes of nonlinear functions, including exponentials or sigmoids.
These additional symmetries must be enforced at present with some additional considerations of network architecture or constraints.

We also note that in this work we have characterized \emph{global} symmetries, which act on each point in the same way.
Our characterization does not obviously generalize to gauge symmetries, which are local symmetries that apply changes independently to spatially separated points (see e.g. \cite{Bronstein2021GeometricDL}).
That said, we believe our model could encompass general gauge symmetries if we replace the local metric by any position-dependent metric $\Lambda_x$.
Recent work has shown that equivariance under gauge symmetries is possible in the realm of convolutional neural networks by defining coordinate-independent kernels \cite{weiler2021coordinate}.
In our case, we would have to propagate spatially separated vectors to the same location in order to pass them to a locally invariant function, requiring the operation of parallel transport.
This operation would then also have to obey our invariance characterization; we believe this is possible and we will detail it in future work.

\section{Numerical experiments}
\begin{figure}[t]
   \centering
   \includegraphics[scale=0.4]{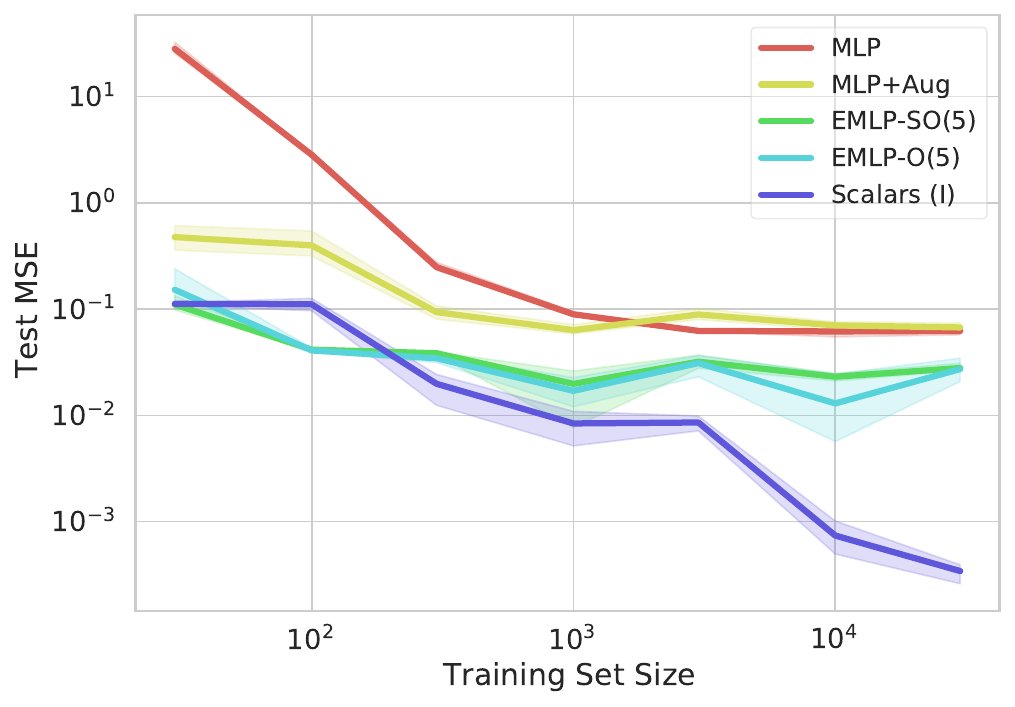}
   \includegraphics[scale=0.4]{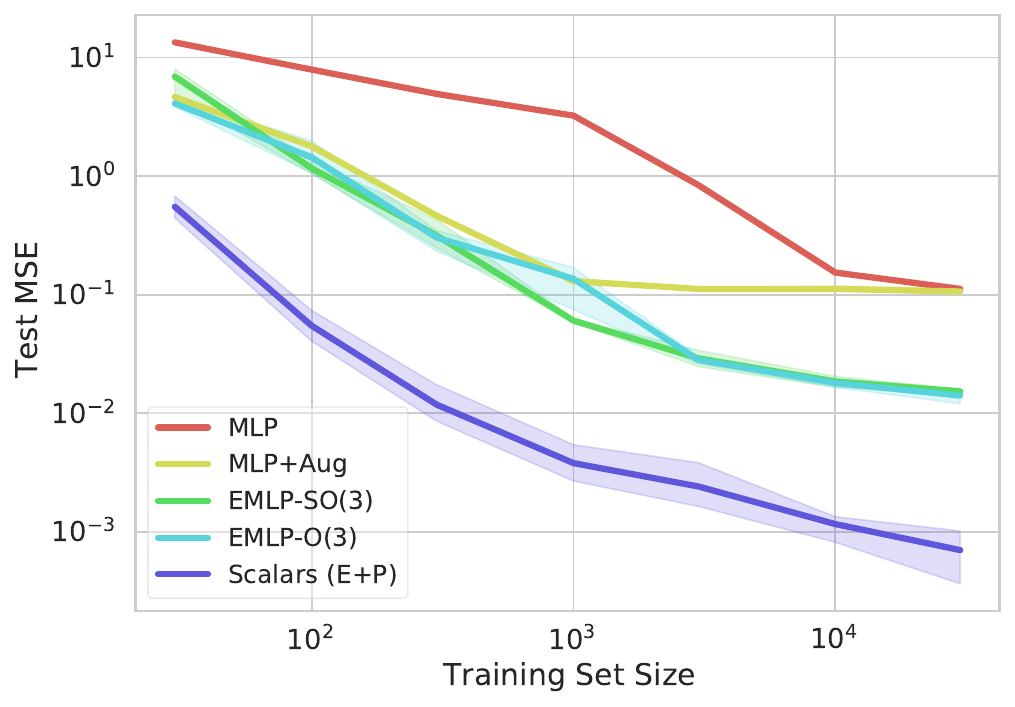}
   \caption{{\small Test error as a function of training set size for (\textbf{Left}) the O(5)-invariant task, and (\textbf{Right}) the O(3)-equivariant task. Scalars (I) denotes the MLP model using the scalars method for O(5)-invariance construction, and Scalars (E+P) denotes the MLP model using the scalars method for O(3)-equivariance and permutation invariance construction.
   MLP denotes a standard multilayer perceptron, and MLP+Aug denotes an MLP that has been trained with data augmentation to the given symmetry group.
   EMLP-G denotes the EMLP models from \cite{finzi2021practical} with different relevant symmetry groups G.
   For both tasks, the scalar method 
   outperforms all other methods. The shaded regions depict 95\% confidence intervals taken over 3 runs.}}
   \label{fig:numerical}
   \vspace{-.4cm}
\end{figure}
\label{sec.numerical}
We demonstrate our approach using scalar-based multi-layer perceptrons (MLP) on two toy learning tasks from \cite{finzi2021practical}: an O(5)-invariant task and an O(3)-equivariant task. Further numerical experiments with these methods applied to dynamical systems appear in \cite{yao2021simple}. The code is available on GitHub\footnote{\url{https://github.com/weichiyao/ScalarEMLP}}, and it reuses much of the functionality provided by EMLP~\cite{finzi2021practical}.

\paragraph{O(5)-invariant task:}
Given observations of the form $(x_1^i, x_2^i, f(x_1^i, x_2^i))_{i=1}^N$, where $x_1^i,x_2^i \in \mathbb R^5$ and  $f$ is an O(5)-invariant scalar function, we aim to learn $f$. In this case $f$ is the example from \cite{finzi2021practical}:
\begin{align}f(x_1,x_2) = \sin(\Vert x_1\Vert)-\frac{\Vert x_2\Vert^3}{2}+\frac{x_1^\top x_2}{\Vert x_1\Vert\Vert x_2\Vert}, \quad x_1,x_2\in\mathbb{R}^5.
\end{align}

We model $f$ using Lemma \ref{lemma:1}, namely 
$f(x_1,x_2) = g(x_1^\top x_1, x_1^\top x_2, x_2^\top x_2)$
where $g:(\mathbb R)^3\to \mathbb R$ could be any function. We learn $g$ by implementing it as an MLP.

\paragraph{O(3)-equivariant task:}
In this task, the data are $n=5$ point masses and positions $(m_i,x_i)_{i=1}^5$, and the goal is to predict the matrix $\mathcal{I} =\sum_{i=1}^5 m_i(x_i^\top x_i I- x_ix_i^\top)$. To this end we aim to learn
\begin{equation}
\begin{aligned}
    h: (\mathbb R\times \mathbb{R}^3)^5   &\to \mathbb R^{3\times 3}  \\
    (m_i,x_i)_{i=1}^5 &\mapsto  \mathcal{I} .
\end{aligned}\label{eq:goal_equiv}
\end{equation} 
The function $h$ is $\mathrm{O}(3)$-equivariant in positions, and $S_5$-invariant, namely, for all $Q\in \mathrm{O}(3)$ and $\sigma \in S_5$ we have
$
    h((m_{\sigma(i)},Qx_{\sigma(i)})_{i=1}^5)=Qh((m_i,x_i)_{i=1}^5).
$
By Proposition \ref{lemma.permutations} we know there exists functions $f_0$, $f_1$, and $f_2$ such that  
\begin{multline*}
    h((m_i,x_i)_{i=1}^5) = \sum_{i=1}^5 f_{0}( x_i^\top x_i, m_i, \{x_k^\top x_l,m_k, m_l\}_{k,l\neq i})\; x_ix_i^\top +
    \\
    +\sum_{i>j=1}^5f_{1}( x_i^\top x_j, m_i,m_j, \{x_k^\top x_l,m_k, m_l\}_{k,l\neq i,j})\; x_ix_j^\top +
    f_2(\{x_i^\top x_j, m_i,m_j\}_{i,j=1}^5) I.
\end{multline*}
Here the set notation means that the function $f_i$ is permutation-invariant with respect to the inputs in the set. The function $h$ is O(3)-equivariant by construction (see Section \ref{sec.equivariance}).
We model $f_0$, $f_1$ ,and $f_2$ with MLPs on deep sets \cite{zaheer2017deep}.

\paragraph{Acknowledgements:}
It is a pleasure to thank Tim Carson (Google), Miles Cranmer (Princeton), Johannes Klicpera (TUM), Risi Kondor (Chicago), Lachlan Lancaster (Princeton), Yuri Tschinkel (NYU), Fedor Bogomolov (NYU), Gregor Kemper (TUM), and Rachel Ward (UT Austin) for valuable comments and discussions. We especially thank Gerald Schwartz (Brandeis) for indicating the line of the polynomial argument to us, and Marc Finzi (NYU) for their help with the EMLP codebase. SV was partially supported by NSF DMS 2044349, the NSF–Simons Research Collaboration on the Mathematical and Scientific Foundations of Deep Learning (MoDL) (NSF DMS 2031985), and the TRIPODS Institute for the Foundations of Graph and Deep Learning at Johns Hopkins University. KSF was supported by the Future Investigators in NASA Earth and Space Science and Technology (FINESST) award number 80NSSC20K1545.

\bibliographystyle{abbrv}
\bibliography{main}

\clearpage
\appendix 

\section{Symmetry-enforcing universal neural networks and irreducible representations} \label{sec:app.irred}

In this Section we describe the general principles to parameterize invariant and equivariant universally approximating functions. As mentioned in Section \ref{sec.related}, the most common approach is is to write the invariant/equivariant functions as the composition of linear invariant/equivariant layers and non-linear compatible pointwise activation functions. 
We start by describing the $G$-invariant graph networks from \cite{maron2019universality}, which
are invariant with respect to a group $G$ 
(typically a subgroup of the symmetric group $S_n$, since the work is in the context of graph neural networks) 
acting on $\mathbb R^n$ as $g\star   x$ ($g\in G$, $  x\in \mathbb R^n$). They extend the action $\star$ to tensors $  t \in \mathbb R^{n^k}$ 
 by acting in each of the $k$ dimensions of the tensor.
 
Consider a graph $ X=( V,  E)$ on $n$ nodes.
Let $ V=(v_1, \ldots, v_n)\in (\mathbb R^d)^n$, where $v_i\in \mathbb R^d$ are the node features, and assume for simplicity that the edge features are real numbers represented by the matrix $E\in \mathbb R^{n\times n}$, then a graph neural network learns  equivariant functions $f:X\to (\mathbb R^{\ell})^n$, where the group of permutations $S_n$ acts in $(\mathbb R^{d})^n$ and $(\mathbb R^{\ell})^n$ as in Table~\ref{tab.actions}, and it acts on the graph $X$ as:
\begin{align} \label{pi.action}
\Pi\star ( V,  E) = (\Pi \star  V, \Pi \star  E) \quad \text{ where } \Pi \star  E = \Pi  E\Pi^\top.
\end{align}

The GNN learns an embedding of the form:
\begin{equation}
    \label{eq.gnn.eq}
\mathcal{N}(  X)= \theta \circ   L_T \circ \ldots.\, \circ   L_2 \circ \theta \circ   L_1 (  E)~,
\end{equation}
where $\theta$ is a point-wise non-linearity and $  L_i: \mathbb R^{n^{\otimes k_i}}\to \mathbb R^{n^{\otimes k_{i+1}}}$ is a linear \emph{equivariant} map.
When $G$ is the group of permutations, the network \eqref{eq.gnn.eq} can universally approximate all continuous equivariant functions as long as the order of the intermediate tensors can be arbitrarily large \cite{maron2019universality, keriven2019universal}. The key insight is that the $k_i$-tensors in the intermediate layers can express all (equivariant) polynomial functions of the input of degree $k_i$.
Universality follows from a generalization of the Stone-Weierstrass theorem that states that every continuous equivariant function defined on a compact set can be uniformly approximated as closely as desired by equivariant polynomial functions (for the specific action by permutations) \cite{keriven2019universal, azizian2020characterizing}.
The disavantage of this approach is that the space of linear equivariant functions $  L_i$, even though it is fully characterized, has dimension that grows super-exponentially with the order of the tensors \cite{maron2018invariant}. 

The approach for general groups is very similar.
Group-equivariant neural networks are written as the composition of linear equivariant functions going to higher order tensors, composed with non-linear pointwise activation functions (for general groups there may be restrictions on the activation functions so that equivariance is preserved).
Recent work proposes to enforce equivariance of these linear maps by imposing constraints \cite{finzi2021practical}.
But in general, the most prevalent approach is to express
intermediate layers as linear maps between (group) representations of $G$. 

Let $G$ be a group acting on $\mathbb R^d$ as $\star$.
In representation-theory language, a representation of $G$ is a map $\rho: G\to \operatorname{GL}(V)$  that satisfies $\rho(g_1g_2)=\rho(g_1)\rho(g_2)$ (where $V$ is a vector space and $\operatorname{GL}(V)$ denotes the automorphisms of $V$, that is, invertible linear maps $V\to V$).
The group action $\star$ of $G$ on $\mathbb R^d$  is equivalent to the group representation $\rho: G \to \operatorname{GL}(\mathbb R^d)$ so that $\rho(g)(v)=g\star v$.
We can extend the action $\star$ to the tensor product $(\mathbb R^d)^{\otimes k}$ so that the group acts independently in every tensor factor (i.e., in every dimension), namely $\rho_k = \otimes_{r=1}^{k} \rho: G \to \operatorname{GL}((\mathbb R^d)^{\otimes k})$. 

The first step is to note that a linear equivariant map $  L_i: (\mathbb R^d)^{\otimes k_i} \to (\mathbb R^d)^{\otimes k_{i+1}}$ corresponds to a map between group representations such that $  L_i \circ \rho_{k_i}(g) = \rho_{k_{i+1}}(g)\circ   L_i$ for all $g\in G$.
Homomorphisms between group representations are easily parametrizable if we decompose the representations in terms of irreps: 
\begin{align}
\rho_{k_i} &= \bigoplus_{\ell=1}^{T_{k_i}} \mathcal T_{\ell} ~.
\end{align}
In particular, Schur's Lemma implies that a map between two irreps over $\mathbb C$ is either zero or a multiple of the identity. 

The equivariant neural-network approach consists in decomposing the group representations in terms of irreps and explicitly parameterizing the maps \cite{kondor2018n, thomas2018tensor, fuchs2020se}.
In general it is not clear how to decompose an arbitrary group representation into irreps. However in the case where $G=\text{SO}(3)$, the decomposition of a tensor representation as a sum of irreps is given by the Clebsh-Gordan decomposition: \begin{equation}
 {\otimes_{s=1}^k \rho_s}= \oplus_{\ell=1}^{T} \mathcal T_{\ell} \label{eq.decomposition}
\end{equation}
The Clebsh-Gordan decomposition not only gives the decomposition of the RHS of \eqref{eq.decomposition} but also it gives the explicit change of coordinates. This decomposition is fundamental for implementing the equivariant 3D point-cloud methods defined in \cite{fuchs2020se, thomas2018tensor, bogatskiy2020lorentz}. 
Moreover, very recent work \cite{dym2020universality} shows that the classes of functions defined in \cite{fuchs2020se, thomas2018tensor} are universal, meaning that every continous SO(3)-equivariant function can be approximated uniformly in compacts sets by those neural networks. 
However, there exists a clear limitation to this approach: Even though decompositions into irreps are broadly studied in mathematics (a.k.a. plethysm), the explicit transformation that allows us to write the decomposition of tensor representations into irreps is a hard problem in general. It is called the {\em Clebsch-Gordan problem}. There is exciting, recent progress on this problem for large classes of groups \cite{alex2011numerical, ibort2017new}. The approach taken in the present work sidesteps this problem altogether.

\section{Equivariant functions under rotations and the orthogonal group} \label{app.od}

\paragraph{Generalized cross-product:}
The generalized cross product of $d-1$ vectors in $\mathbb R^d$ is defined to be the Hodge dual of the exterior product of the $d-1$ vectors. Namely, given $v_1,\ldots, v_{d-1}$ the cross product $v_1\times \ldots \times v_{d-1}$ is the unique vector that satisfies that for all $y\in \mathbb R^d$ 
\begin{equation}
    \inner{v_1\times \ldots \times v_{d-1}}{y} = \operatorname{det}(v_1,\ldots, v_{d-1},y),
\end{equation}
where $\inner{\cdot}{\cdot}$ denotes the usual inner product in $\mathbb R^d$, and $\operatorname{det}(v_1,\ldots,v_d)$ corresponds to the determinant of the matrix with rows $v_1,\ldots,v_d$. 

\paragraph{Proof of 
Proposition~\ref{prop.invariance}:}
The purely set-theoretic statement in Proposition~\ref{prop.invariance} and the polynomial statement have independent proofs. 

For the set-theoretic statement, let $h:(\mathbb R^d)^n \rightarrow \mathbb R^d$ be an arbitrary O($d$)-equivariant vector function. Lemma~\ref{lemma.span} shows that for any fixed $n$-tuple of vectors $(v_1,\ldots,v_n)$, there exists an $n$-tuple of real numbers $(a_1,\ldots,a_n)$ such that
\begin{equation}
h(v_1,\dots,v_n) = \sum_{t=1}^n a_tv_t.
\end{equation}
Pick one representative tuple $(v_1,\dots,v_n)$ from each O($d$)-orbit on $(\mathbb R^d)^n$, and find a corresponding tuple $(a_1,\dots,a_n)$ satisfying this equation. (Note that the tuple $(a_1,\dots,a_n)$ thus depends on the orbit; however, this dependence is suppressed in the notation to prevent it from becoming cumbersome.) Define O($d$)-invariant functions $f_t:(\mathbb R^d)^n \rightarrow \mathbb R$ by defining $f_t(v_1,\dots,v_n) = a_t$ for the $a_t$ corresponding to the chosen representative of $(v_1,\dots,v_n)$'s orbit. Then
\begin{equation}
h(v_1,\dots,v_n) = \sum f_t(v_1,\dots,v_n) v_t
\end{equation}
at the chosen orbit representatives, and the O($d$)-equivariance of both sides then implies this equation is satisfied everywhere.

For the polynomial statement, now assume $h:(\mathbb R^d)^n \rightarrow \mathbb R^d$ is a {\em polynomial} O($d$)-equivariant map. Define a new map $\overline h : (\mathbb R^d)^{n+1}\rightarrow \mathbb R$ by
\begin{equation}
\overline h(v_1,\dots,v_n,y) = \inner{h(v_1,\dots,v_n)} {y},
\end{equation}
where $\langle\cdot,\cdot\rangle$ is the usual inner product on $\mathbb R^d$. Then for any $Q\in \text{O}(d)$, we have
\begin{align*}
\overline h(Q\,v_1,\dots,Q\,v_n,Q\,y)&=\inner{Q\,h(v_1,\dots,v_n)}{ Q\,y} \\
&= \inner{ h(v_1,\dots,v_n)}{ y}\\
&= \overline h(v_1,\dots,v_n,y),
\end{align*}
where the first equality is by the definition of $\overline h$ and the O($d$)-equivariance of $h$ and the second is the fact that the inner product is preserved by O($d$). In other words, $\overline h$ is an O($d$)-invariant scalar function with respect to O($d$)'s natural action on $(\mathbb R^d)^{n+1}$.

It follows from the First Fundamental Theorem for O($d$) that $\overline h$ is a polynomial in the inner products $\inner{v_i}{ v_j}$, $\inner{v_t}{ y}$, and $\inner{y}{y}$. On the other hand, by its definition, it is homogeneous of degree 1 in the coordinates of $y$. It follows that $\inner{y}{y}$ does not appear  this polynomial expression for $\overline h$ in terms of the dot products, and furthermore, each term contains some $\inner{v_t}{y}$ with degree 1, and is otherwise composed of $\inner{v_i}{v_j}$'s. Grouping the terms according to which $\inner{v_t}{ y}$ each contains, we get
\begin{equation}
\overline h(v_1,\dots,v_n,y) = \sum_{t=1}^n g_t( \inner{v_i}{v_j}_{i,j=1}^n) \inner{ v_t}{ y}
\end{equation}
for all $v_1,\dots,v_n,y$. Defining $f_t(v_1,\dots,v_n) = g_t(\inner{v_i}{ v_j}_{i,j=1}^n)$ for each $t$, we find the $f_t$'s are invariant polynomials. Unspooling the definition of $\overline h$ on the left side, and using the linearity of the dot product on the right, this becomes
\begin{equation}
\inner{h(v_1,\dots,v_n)}{ y} = \inner{\left(\sum_{t=1}^n f_t(v_1,\dots,v_n) v_t\right)}{ y}.
\end{equation}
As this equation holds for all $y$, and dot product is a nondegenerate bilinear form, we can conclude that
\begin{equation}
h(v_1,\dots,v_n) = \sum_{t=1}^n f_t(v_1,\dots,v_n) v_t,
\end{equation}
with the $f_t$ invariant polynomials, as promised.\qed

\paragraph{Proof of Proposition~\ref{prop.sod}:}
The proof of this proposition runs parallel to the proof of Proposition~\ref{prop.invariance}. For the set-theoretic part of the proposition, we need a substitute for Lemma~\ref{lemma.span} that applies to SO($d$). The needed statement is that if $h:(\mathbb R^d)^n\rightarrow \mathbb R^d$ is SO($d$)-equivariant, and the span of $v_1,\dots,v_n$ has dimension different from $d-1$, then $h(v_1,\dots,v_n)$ lies in $\operatorname{span}(v_1,\dots,v_n)$. We see this as follows:

Let $W=\operatorname{span}(v_1,\dots,v_n)$, and suppose this has dimension $d-m$. The pointwise stabilizer of $W$ in SO($d$), call it $G_W$, acts in the orthogonal complement $W^\perp$ of $W$. Isomorphically, $G_W$ is SO($m$), and its action on $W^\perp$ is SO($m$)'s canonical action on $\mathbb R^m$. If $m\neq 1$, this action is irreducible; in particular, there are no nonzero vectors in $W^\perp$ that are fixed by the whole action, and it follows that there are no vectors in $\mathbb R^d$ lying outside of $W$ that are fixed by $G_W$. On the other hand, because $G_W$ fixes each of $v_1,\dots,v_n$, equivariance of $h$ implies $h(v_1,\dots,v_n)$ is fixed by $G_W$ as well. It follows that $h(v_1,\dots,v_n)\in W$.

From this lemma it follows that for any tuple $(v_1,\dots,v_n)$ with $\dim \operatorname{span}(v_1,\dots,v_n) \neq d-1$, there exists a solution in $(a_1,\dots,a_n)\in\mathbb R^n$ to the equation
\begin{equation}
h(v_1,\dots,v_n) = \sum a_tv_t.
\end{equation}
On the other hand, when $\dim \operatorname{span}(v_1,\dots,v_n) = d-1$, then there must exist $d-1$ linearly independent vectors $v_{i_1},\dots,v_{i_{d-1}}$. In this situation, the generalized cross product $v_{i_1}\times\dots\times v_{i_{d-1}}$ is nonzero, and linearly independent from $v_{i_1},\dots,v_{i_{d-1}}$ (in fact it lies in the orthogonal complement of $\operatorname{span}(v_1,\dots,v_n)$). Thus it and $v_1,\dots,v_n$ span $\mathbb R^d$. So in all cases, i.e., for any tuple $(v_1,\dots,v_n)$, there exists a solution in the $a_t$ and $a_S$ to
\begin{equation}
h(v_1,\dots,v_n) = \sum_{t=1}^n a_tv_t + \sum_{S\in \binom{[n]}{d-1}}a_Sv_S,
\end{equation}
where the notation is as in the statement of Proposition~\ref{prop.sod}.

The proof of the set-theoretic statement now proceeds exactly as for the proof of Proposition~\ref{prop.invariance}: from each SO($d$)-orbit in $(\mathbb R^d)^n$, choose a representative tuple $(v_1,\dots,v_n)$; pick values of $a_t$ and $a_S$ that satisfy the above for this tuple; use them to define invariant functions $f_t(v_1,\dots,v_n)$ and $f_S(v_1,\dots,v_n)$; then the equation
\begin{equation}
h(v_1,\dots,v_n) = \sum_{t=1}^n f_t(v_1,\dots,v_n)v_t + \sum_{S\in \binom{[n]}{d-1}}f_S(v_1,\dots,v_n)v_S
\end{equation}
holds at the chosen representative tuples, and the SO($d$)-equivariance of both sides shows it holds everywhere.

The proof of the polynomial statement is even more directly parallel to the proof of Proposition~\ref{prop.invariance}. As in that proof, given an SO($d$)-equivariant polynomial vector function $h:(\mathbb R^d)^n\rightarrow \mathbb R^d$, define a polynomial scalar function
\begin{equation}
\overline h(v_1,\dots,v_n,y) = \inner{h(v_1,\dots,v_n)}{y}.
\end{equation}
For the same reason as before, it is SO($d$)-invariant. It follows from the First Fundamental Theorem for SO($d$) that $\overline h$ is a polynomial in the dot products $\langle v_i,v_j\rangle$, $\langle v_t,y\rangle$, $\langle y,y\rangle$, and the $d\times d$ subdeterminants $\det(v_{i_1}\dots v_{i_d})$ and $\det(v_{i_1}\dots v_{i_{d-1}}\; y)$. Because it is also homogeneous of degree 1 in the coordinates of $y$, $\langle y,y\rangle$ cannot occur, and every term must contain either a $\langle v_t,y\rangle$ or a $\det(v_{i_1}\dots v_{i_{d-1}}\; y)$ exactly once, and is otherwise a product of $\langle v_i,v_j\rangle$'s and $\det(v_{i_1}\dots v_{i_d})$'s (which is to say, it is otherwise an SO($d$)-invariant function of the $v_j$'s). Grouping according to which $\langle v_t,y\rangle$ or a $\det(v_{i_1}\dots v_{i_{d-1}}\; y)$ each term contains, we get
\begin{equation}
\overline h(v_1,\dots,v_n) = \sum_{t=1}^n f_t(v_1,\dots,v_n)\langle v_t,y\rangle + \sum_{S\in \binom{[n]}{d-1}}f_S(v_1,\dots,v_n)\det(\tilde v_S,y),
\end{equation}
where the $f_t$'s and $f_S$'s are invariant, and where we have abbreviated $\det(v_{i_1},\dots,v_{i_{d-1}},y)$ as $\det(\tilde v_S,y)$ with $S = \{i_1,i_2,\dots,i_{d-1}\}$. (The tilde is to distinguish it from $v_S = v_{i_1}\times\dots\times v_{i_{d-1}}$.) Now,
\begin{equation}
\det(\tilde v_S,y) = \langle v_S,y\rangle,
\end{equation}
by definition of the generalized cross product $v_S$, so by applying the linearity of the dot product on the right, and the definition of $\overline h$ on the left, we get
\begin{equation}
\langle h(v_1,\dots,v_n), y \rangle= \langle \sum_{t=1}^n f_t(v_1,\dots,v_n)v_t + \sum_{S\in \binom{[n]}{d-1}} f_S(v_1,\dots,v_n)v_S, y\rangle.
\end{equation}
As with O($d$), we get the equality claimed in Proposition~\ref{prop.sod} because $y$ is arbitrary and $\langle\cdot,\cdot\rangle$ is a nondegenerate bilinear form.\qed

\section{Translation-invariant functions}\label{app.translation}

\paragraph{Proof of Lemma~\ref{lemma.translation.invariance}:} Consider the map $\Pi:(\mathbb R^d)^n \rightarrow (\mathbb R^d)^{n-1}$ given by $(v_1,v_2,\ldots,v_n)\mapsto (v_2-v_1,\ldots,v_n-v_1)$. This map has a section $i:(\mathbb R^d)^{n-1}\hookrightarrow (\mathbb R^d)^n$ given by $(v_2,\dots,v_n)\mapsto (0,v_2,\dots,v_n)$, i.e., $\Pi\circ i$ is the identity. The fibers of $\Pi$ are exactly the orbits of the translation action. Thus a translation-invariant function $f$ on $(\mathbb R^d)^n$ descends via $\Pi$ to a well-defined function $\tilde f$ on $(\mathbb R^d)^{n-1}$, so that 
\begin{equation}
f = \tilde f \circ \Pi.
\end{equation}
Then 
\begin{equation}
\tilde f = \tilde f \circ \Pi \circ i = f\circ i.
\end{equation}
Because $\Pi,i$ are polynomial, these equations show that if either $f$ or $\tilde f$ is polynomial then so is the other. Because $i,\Pi$ are both equivariant for the action of $GL(n,\mathbb R)$, any equivariance property of either $f$ or $\tilde f$ with respect to any subgroup $G\subset GL(n,\mathbb R)$ is passed to the other.
\qed

\paragraph{Proof of Proposition~\ref{prop.Ed}:}
Let $h:(\mathbb R^d)^n \rightarrow \mathbb R^d$ be translation-invariant and O($d$)-equivariant. Using Lemma~\ref{lemma.translation.invariance} we can consider an O($d$)-equivariant function $\tilde h:(\mathbb R^d)^{n-1}\to \mathbb R^d$ so that $h(v_1,v_2, \ldots, v_n)=\tilde h(v_2-v_1, \ldots, v_n-v_1)$. Therefore there exists  O($d$) invariant functions $\tilde f_t$
\begin{align}
    \tilde h(v_2-v_1, \ldots, v_n-v_1) &= \textstyle \sum_{t=2}^n \tilde f_t(v_2-v_1, \ldots, v_n-v_1)(v_t-v_1). 
    \end{align}
This implies $h$ can be written as 
   \begin{align} 
   h(v_1,v_2, \ldots, v_n)
&=\textstyle\sum_{t=2}^n f_t(v_1,\ldots, v_n)\,v_t -(\textstyle\sum_{t=2}^n f_t(v_1,\ldots, v_n))\,v_1
\end{align}
where the functions $f_t$ are O($d$)- and translation-invariant, which has the claimed form. 

For the case where $h$ is O($d$)- and translation-equivariant we first observe that it suffices to take one representative per orbit, define the function on that representative, and extend it everywhere else by translations: 
\begin{align}
    h(v_1,\ldots, v_n)= \tilde h(v_2-v_1,\ldots, v_n-v_1)+v_1,
\end{align}
where $\tilde h$ is O($d$)-equivariant.
Therefore any function $h$ that is O($d$)- and translation-equivariant can be written as
\begin{align}
    h(v_1,\ldots, v_n)&= \textstyle\sum_{t=2}^n \tilde f_t(v_2-v_1,\ldots, v_n-v_1)(v_t-v_1) + v_1 \\
    &=  \textstyle\sum_{t=2}^n f_t(v_1, v_2\ldots, v_n)\,v_t  + \left(1- \textstyle\sum_{t=2}^n f_t(v_1, v_2\ldots, v_n)\right)\,v_1. 
\end{align}
\qed

\section{Invariant and equivariant functions under the Lorentz group}
\label{app.lorentz}
\paragraph{Proof of Proposition \ref{prop.lorentz}:}
The proof follows the pattern of the proof of Proposition~\ref{prop.invariance}. The claim for polynomial $h$ follows in exactly the same manner, because there is a First Fundamental Theorem for the Lorentz group precisely analogous to Lemma~\ref{lemma:1}---in fact, both are consequences of the First Fundamental Theorem for O($d$,$\mathbb C$) \cite[Proposition~5.2.2]{goodman2009symmetry}, because over $\mathbb C$, the Lorentz group and the orthogonal group are related by a change of basis (namely, multiplying the space coordinates by $i$). 

The claim for arbitrary continuous $h$ has two added subtleties for the Lorentz group. First, unlike for O($d$), the Minkowski inner products of the vectors $v_j$ do not distinguish every pair of distinct O($1$,$d-1$) orbits from each other: for example the orbit of $(v,\dots,v)$, with $\inner{v}{v}=0$, is indistinguishable from $0$ by the inner products, even if $v\neq 0$. So there exist O($1,d-1$)-invariant set functions on $(\mathbb R^d)^n$ that are not functions of the Minkowski inner products. However, they do distinguish every pair of {\em closed} orbits, and every orbit has a closed orbit in its closure (this is a general theorem about the invariant ring of a reductive group, see \cite[Theorems~4.6 and 4.7 and their corollaries]{popov1994invariant}; the Lorentz group is reductive because it is a real form of the reductive group O($d, \mathbb C$)). Thus every {\em continuous} invariant function is indeed a function of the Minkowski inner products.

The second subtlety is in proving that a continuous, equivariant vector function $h$ always lies in the span of $v_1,\dots,v_n$. Our approach in Section~\ref{sec.equivariance} to show that the invariant functions under the orthogonal group are restricted to the span of the input vectors is based on the following idea: Given $v_1,\ldots v_n$, if they span $\mathbb R^d$, then the result trivially holds. Otherwise let $\{w_1,\ldots w_m\}$ be a basis of  $\text{span}(v_1,\ldots,v_n)$ and  extend it to a basis of $\mathbb R^{d}$. 
Then we do a full orthogonalization (via Gram Schmidt) to get orthogonal vectors $u_1,u_2,\cdots,u_d$ and then construct $Q$:
\begin{align}
    u_1 &\leftarrow w_1
    \\
    \mbox{then for each $j$ ($2\leq j\leq d$) in order:} ~~ u_j &\leftarrow w_j - \sum_{k=1}^{j-1} \frac{w_j^\top u_k}{u_k^\top u_k}\,u_k
\end{align}\vspace{-1\baselineskip}\begin{align}
    Q &\leftarrow \left[\sum_{j=1}^{m} \frac{1}{u_j^\top u_j}\,u_j\,u_j^\top\right] - \left[\sum_{j=m+1}^{d} \frac{1}{u_j^\top u_j}\,u_j\,u_j^\top
    \right]
    ~.
\end{align}
Then $Q$ is an element of O($d$) that fixes everything in $\text{span}(v_1,\ldots, v_n)$ and does not fix anything outside it. This can be used to show that equivariant functions are restricted to the span of the inputs (see Lemma \ref{lemma.span}).

This idea can be generalized to the Lorentz group O$(1,d)$. Let $(t,x_1,\ldots, x_d) \in \mathbb R^{d+1}$, the metric is
\begin{align}
    \Lambda &= \hat{e}_t\,\hat{e}_t^\top - \sum_{j=1}^d \hat{e}_{x_j}\,\hat{e}_{x_j}^\top
    ~,
\end{align}
where $\hat{e}_t$ is a timelike unit vector and the $\hat{e}_{x_j}$ are orthonormal space-like vectors (all orthogonal to $\hat{e}_t$).
Given $\{v_1,\ldots v_n\}$ we consider $\{w_1,\ldots w_m\}$ a basis of  $\text{span}(v_1,\ldots,v_n)$ and we extend it to $\{w_1,\ldots w_{d+1}\}$ a basis of $\mathbb R^{d+1}$ and orthogonalize all the vectors according to the Lorentz generalization of orthogonalization given above to make orthogonal vectors $u_1,u_2,\cdots,u_{d+1}$ and then construct $Q$:
\begin{align}
    u_1 &\leftarrow w_1
    \\
    \mbox{then for each $j$ ($2\leq j\leq d+1$) in order:} ~~ u_j &\leftarrow w_j - \sum_{k=1}^{j-1}\frac{\inner{w_j}{u_k}}{\inner{u_k}{u_k}}\,u_k  \label{eq.gs}
\end{align}\vspace{-1\baselineskip}\begin{align}
    Q &\leftarrow \left[\sum_{j=1}^{m} \frac{1}{\inner{u_j}{u_j}}\,u_j\,u_j^\top \Lambda\right] - \left[\sum_{j=m+1}^{d+1} \frac{1}{\inner{u_d}{u_d}}\,u_d\,u_d^\top \Lambda \right]
    ~. \label{eq.q}
\end{align}
Note that \eqref{eq.gs} and \eqref{eq.q} require $\inner{u_j}{u_j}\neq 0$ for all $j$, however, the Minkowski inner product \eqref{eq.ip} is not positive definite, in particular there can be \emph{lightlike} vectors $u_j\neq 0$ in which $\inner{u_j}{u_j}= 0$. 
In our argument we first will require that if $m=1$ then $v_1$ is not lightlike.
If the procedure \eqref{eq.gs} generates a lightlike $u_j$ for any $j$ in the range $1\leq j\leq d+1$ we proceed in the following way:
\begin{itemize}
    \item If $m>1$ and $j\leq m$, replace the vectors $w_1, \ldots, w_n$ by a linear combination of them with random coefficients and start again. The fact that the ligthlike vectors are a codimension 1 conic surface ensures that every linear subspace of dimension greater than or equal to 2 has an orthogonal basis where none of the basis elements are lightlike.
    \item If $j>m$ and $d\geq m+1$ then the complement of the span of $w_1,\ldots,w_m$ has dimension at least 2 and the same procedure can be applied: one can remove lightlike vectors without changing the subspace by replacing $w_{m+1},\ldots,w_{d+1}$ with a linear combination of them.
    \item If $m=d$ the procedure will not produce a lightlike vector as its last vector. This is because an orthogonal basis cannot have a lighlike vector. Each lightlike vector is orthogonal to $d$ linearly independent vectors but one of those is itself. So if $u_{d+1}$ is a lightlike vector $w_{d+1}$ is in the span of $w_1,\ldots w_d$ which contradicts our assumption that $\{w_1,\ldots, w_{d+1}\}$ is linearly independent.
\end{itemize}
This shows that given $\{v_1,\ldots, v_n\}$ set of vectors spanning a linear space of dimension $m>1$, there exists a $Q\in$O($1,d$) such that $Q(v)=v$ for all $v\in \text{span}(v_1,\ldots,v_n)$ and $Q(w)\neq w$ for all $w\not \in \text{span}(v_1,\ldots,v_n)$, which can be used to show that $h(v_1,\ldots,v_n)\in \text{span}(v_1,\ldots,v_n)$ as in Lemma~\ref{lemma.span}.

If $m=1$ and $v_1$ is lightlike, the continuity of $h$ and the fact that the set of non-lightlike vectors are dense imply that $h(v_1)\in\text{span}(v_1)$. Without the continuity of $h$ this argument wouldn't work. This is because the set of lightlike vectors is invariant under the action of the Lorentz group.\qed

\section{Permutation-invariant and equivariant functions that are also orthogonal or Lorentz-equivariant} \label{app.permutations}

\paragraph{Proof of Proposition \ref{lemma.permutations}:}
Let $\Pi_{j\to i}=\{\sigma \in S_n: \, \sigma(j)=i\}$. We start by writing  $h$ as in \eqref{od.equivariant}. Using the invariance we average over the orbit of $S_n$ we obtain:
\begin{align}
    h(v_1,\ldots, v_n) &=
    \sum_{j=1}^n f_j(v_1, \ldots, v_n) v_j
    \\
    &= \frac{1}{n!} \sum_{j=1}^n \sum_{\sigma \in S_n} f_j(v_{\sigma(1)}, \ldots, v_{\sigma(n)}) v_{\sigma(j)}
    \\
    &= \frac{1}{n!} \sum_{j=1}^n \left( \sum_{i=1}^n \sum_{\sigma \in \Pi_{j\to i}} f_j(v_{\sigma(1)}, \ldots, v_{\sigma(n)})v_i\right)
    \\
    &=\sum_{i=1}^n \frac{1}{n!}\left( \sum_{j=1}^n \sum_{\sigma \in \Pi_{j\to i}} f_j(v_{\sigma(1)}, \ldots, v_{\sigma(n)})\right)v_i
\end{align}
Let $\tilde f_i(v_i,v_1,\ldots,v_{i-1},v_{i+1}, \ldots, v_n) = \frac{1}{n!} \left( \sum_{j=1}^n \sum_{\sigma \in \Pi_{j\to i}} f_j(v_{\sigma(1)}, \ldots, v_{\sigma(n)})\right)$. For fixed $j$, the set of permutations $\Pi_{j\to i}$ is stable (as a set) under post-composition with any permutation $\tau$ that fixes $i$. As a consequence, each summand $\sum_{\sigma \in \Pi_{j\to i}} f_j(v_{\sigma(1)}, \ldots, v_{\sigma(n)})$ is invariant under the action of any such permutation $\tau$. As a consequence, $\tilde f_i$ is invariant with respect to the last $n-1$ inputs, and $h$ can be expressed as
\begin{equation}
    h(v_1,\ldots, v_n) = \sum_{i=1}^n \tilde f_i(v_{i}, v_{[-i]})\,v_i,
\end{equation}
using the notation $v_{[-t]}:=(v_1,\ldots,v_{t-1},v_{t+1}, \ldots, v_n)$.
We now show that all $\tilde f_i$'s can be chosen to the same functions. In order to do so average over permutations again:

\begin{align}
     h(v_1, \ldots,  v_n) &= \frac{1}{n!} \sum_{i=1}^n \sum_{\sigma \in S_n} \tilde f_i(v_{\sigma(i)}, v_{[-\sigma(i)]}) v_{\sigma(i)},
     \\
     & = \frac{1}{n!} \sum_{i=1}^n \left(\sum_{j=1}^n \sum_{\sigma \in \Pi_{i\to j}} \tilde f_i(v_j, v_{[-j]}) \right) v_{j},
     \\
     & = \frac{1}{n!}\sum_{j=1}^n \left(\sum_{i=1}^n (n-1)! \, \tilde f_i(v_j, v_{[-j]}) \right) v_{j}, \\
     & = \sum_{j=1}^n \left( \frac{1}{n}\sum_{i=1}^n \tilde f_i(v_j, v_{[-j]}) \right) v_{j}, \label{eq.perm.same}
\end{align}
Therefore we define $\hat f$ such as
\begin{equation}
    \hat f(w, w_1,\ldots, w_{n-1}) = \frac{1}{n} \sum_{i=1}^n \tilde f_{i}(w, w_1,\ldots, w_{n-1}),
\end{equation}
a $O(d)$-invariant function that is permutation invariant with respect to the last $n-1$ inputs. The computation in \eqref{eq.perm.same} shows that: 
\begin{equation}
h(v_1, \ldots,  v_n) =
   \sum_{j=1}^n  \hat f(v_j, v_{[-j]}) v_{j},
\end{equation}
which proves the theorem. \qed

\paragraph{Proof of Proposition \ref{prop.permutation.equivariance}:}
Let $\sigma \in S_n$ and $(h_1,\ldots, h_n)=h:(\mathbb R^d)^n\to (\mathbb R^d)^n$ a  permutation-equivariant function. Then 
\begin{align}
h(\sigma\star (v_1, \ldots, v_n)) &= 
\sigma \star h(v_1,\ldots, v_n) \\
h(v_{\sigma(1)}, \ldots, v_{\sigma(n)})&=
(h_{\sigma(1)}(v_1,\ldots, v_n), \ldots, h_{\sigma(n)}(v_1,\ldots, v_n) ) \\
(h_1 (v_{\sigma(1)}, \ldots, v_{\sigma(n)}), \ldots, h_n(v_{\sigma(1)}, \ldots, v_{\sigma(n)}) ) &=
(h_{\sigma(1)}(v_1,\ldots, v_n), \ldots, h_{\sigma(n)}(v_1,\ldots, v_n) ) \label{eq.equiv}
\end{align}

Since $h_i: (\mathbb R^d)^n\to \mathbb R^d$ are 
O($d$)-equivariant (or continuous O($1,d-1$)-equivariant) we have that for all $i$ there exists $f_t^{i}:(\mathbb R^d)^n\to \mathbb R$ O($d$)-invariant (or O($1,d-1$)-invariant) such that:
\begin{align}
    h_i(v_1,\ldots, v_n) = \sum_{t=1}^n f_t^{(i)}(v_1,\ldots, v_n) v_t. \label{eq.lincomb}
\end{align}
Combining \eqref{eq.equiv} and \eqref{eq.lincomb} we get
\begin{align}
    h_i(v_\sigma(1),\ldots, v_\sigma(n)) &= \sum_{t=1}^n f_t^{(i)}(v_{\sigma(1)}, \ldots, v_{\sigma(n)}) v_{\sigma(t)} \label{eq.sigma}\\
    h_{\sigma(i)}(v_1,\ldots, v_n)
    &= \sum_{t=1}^n f_t^{(\sigma(i))}(v_1,\ldots, v_n) v_{t} \label{eq.sigma2} .
\end{align}
Taking $t=\sigma^{-1}(j)$ in \eqref{eq.sigma} we get $t=j$ in \eqref{eq.sigma2} and matching coefficients we get
\begin{align}
f^{(i)}_{\sigma^{-1}(j)}(v_{\sigma(1)}, \ldots, v_{\sigma(n)})= f^{(\sigma(i))}_j(v_1,\ldots, \ldots, v_n).
\end{align}
\qed

\section{Einstein summation notation}\label{sec.einstein}
We can interpret the results from Section~\ref{sec.equivariance} as coming from the symmetries encoded in the Einstein summation rules.

An important early realization in differential geometry and in general relativity was that an enormous class of generally covariant forms can be written in a form that is commonly known (incorrectly perhaps) as ``Einstein summation notation'' \cite{einstein}; it is a subset of the prior Ricci calculus \cite{ricci}.
This notation is a method for finding and checking equivariant quantities useful in physical laws:
We imagine that we have O($d$)-equivariant vectors $u$, $v$, $w$, and each has $d$ components such that $[u]_i$ is the $i$th component of $u$.
If we write products with repeated indices like $[u]_i\,[v]_i$, the sum rule is that the repeated index $i$ is summed, so this corresponds to a O($d$)-invariant scalar product (dot product or inner product) of the vectors $u$ and $v$. 
\begin{align}
    [u]_i\,[v]_i &:= \sum_{i=1}^d [u]_i\,[v]_i = u^\top v \label{eq.einstein}
    ~,
\end{align}
where the first $:=$ in \eqref{eq.einstein} defines the summation notation and 
 the second $=$ relates this sum to the linear-algebra operation on two column-oriented vectors ($d\times 1$ matrices).
All indices appear either once (unsummed) or twice (summed) but never more than twice:
\begin{align}
\underbrace{[u]_i\,[v]_i\,[w]_i}_{\mathclap{\text{indices can only appear once or twice!}}} &= \text{undefined}
    ~.
    \end{align}
If there is one unsummed index in an expression, as in $[u]_i\,[v]_i\,[w]_j$, then the result will be an O($d$)-equivariant vector:
\begin{align}
    [u]_i\,[v]_i\,[w]_j &\equiv \left(\sum_{i=1}^d [u]_i\,[v]_i\right)\,w_j \equiv (u^\top v)\,w
    ~.
\end{align}
If there are $\ell$ unsummed indices, then the expression is an O($d$)-equivariant order-$\ell$ tensor.

This notation also reveals that the directionality of the output is entirely encoded in the input vectors and their combination.
This is counter-intuitive given some physics expressions, such as the electromagnetic force law explained in \eqref{eq.biot}; it contains a cross product that typically requires a particular coordinate system to determine the direction of the result.
In this case, the direction of the force cannot depend on the coordinate system; the second cross product saves us, but it is not immediately clear how. 
With Einstein notation, we can express the vector triple product in a coordinate-free way.
We first rewrite the $d=3$ cross product (pseudo-vector product) $a\times b$ in terms of the maximally anti-symmetric rank-3 tensor in $d=3$ (the Levi-Civita symbol) $\epsilon_{ijk}$:
\begin{align}
    f(u, v, w) &= (u\times v)\times w
    \\
    f_n &= [u]_j\,[v]_k\,[w]_m\,\epsilon_{ijk}\,\epsilon_{imn}
    ~.
\end{align}
The properties of the anti-symmetric tensor products are such that this product can be re-written as
\begin{align}
    f_n &= [u]_j\,[v]_k\,[w]_m\,[\delta_{jm}\,\delta_{kn} - \delta_{jn}\,\delta_{km}]
    \\
    f_n &= [u]_i\,[v]_j\,[w]_i - [u]_j\,[v]_i\,[w]_i
    \\
    f(u, v, w) &= (u^\top w)\,v - (v^\top w)\,u
    ~,
\end{align}
where $\delta_{ij}$ is the Kronecker delta.
We thus see that the cross product can be expressed entirely in terms of scalar products ($u^\top w$ and $v^\top w$) and the input vectors ($v$ and $u$), and that this can only be done by employing the anti-symmetric tensor.

Einstein notation is often credited with making physical-law expressions more compact or brief.
But what's important about the notation is that if the rules are obeyed, the notation can produce only equivariant objects in the theory.

Under these summation rules, all O($d$)-invariant scalar expressions that can be made from polynomial expressions of vectors will include only terms that use even numbers of vectors.
All these terms can be rearranged to be written as products of invariant scalar products.
That is, any scalar function that can be written as a polynomial of vectors (or a function of such polynomials) can be written in terms only of available scalar products.
This result corresponds very directly to Lemma \ref{lemma:1}.
For concreteness, here is an example 4-vector scalar form, written in summation notation, reordered as an inner product of tensors or as a simple product of scalars:
\begin{align}
\underbrace{[u]_i\,[v]_j\,[w]_i\,[z]_j}_{\text{4-vector scalar polynomial term}}&=\underbrace{([u]_i\,[v]_j)\,([w]_i\,[z]_j)}_{\text{inner product of order-2 tensors}}=\underbrace{([u]_i\,[w]_i)\,([v]_j\,[z]_j)}_{\text{product of scalars}}
~.
\end{align}

On the other hand, all O($d$)-equivariant vector expressions that can be made from polynomial expressions of vectors will include only terms that use odd numbers of vectors.
These terms can be rearranged by the vector with the unsummed index.
Once they are rearranged this way, the expression becomes the input vectors times polynomial expressions of scalar products.
This demonstrates that any O($d$)-equivariant vector expression constructible in the notation will lie in the subspace spanned by the input vectors.
It also demonstrates that the O($d$)-invariant scalar coefficients multiplying those vectors must themselves be constructible from polynomials (or functions of polynomials, it turns out) of scalar products.
What is also true but to the best of our knowledge doesn't appear explicitly stated in the physics literature is that every O($d$)-equivariant polynomial vector function can be expressed in Einstein notation. This result correspond very directly to Proposition \ref{prop.invariance}.

When the metric is non-trivial, we must face the covariant/contravariant distinction, where vector components might be pre-multiplied by the metric or not.
In this context, there are components $[u]_i$ of the covariant vectors and components $[u]^i$ of the contravariant vectors; the Einstein summation notation rules obtain the additional rule that repeated indices must belong to covariant-contravariant pairs:
\begin{align}
    [u]_i\,[v]^i &\equiv \sum_{i=1}^d [u]_i\,[v]^i \equiv \sum_{i=1}^d \sum_{j=1}^d [u]_i\,[\Lambda]^{ij}\,[v]_j
    \\
    [v]^i &\equiv \sum_{j=1}^d [\Lambda]^{ij}\,[v]_j
    ~,
\end{align}
where the $[\Lambda]^{ij}$ are the components of a $d\times d$ Hermitian metric tensor.
In the case of the Lorentz group, $d=4$ and the metric is diagonal, with elements $(-1,1,1,1)$ on the diagonal.
In the case of the curved spacetime of general relativity, $d=4$ and the metric is (in general) a function of spatial position and time.

We expect that these results will have generalizations for scalar, vector, and tensor functions of scalar, vector, and tensor inputs.

\section{Connections with low-rank matrix completion} \label{app.matrix.completion}
Given a rank $d$, $n\times n$ matrix $M$, Example 4 in \cite{pimentel2016characterization} shows that $M$ is almost always uniquely determined by the entries $\Omega(M):=M_{i,i+s}$ $i=1,\ldots, n$, $s=0,\ldots ,d$, considering the indices ``wrap around'' (i.e., $M_{i,n+s}$ corresponds to $M_{i,s}=M_{i,n+s(\mathrm{mod }\; n)}$). 
In particular invariant functions $f:(\mathbb R^d)^n\to \mathbb R^\ell$, can be expressed as \eqref{eq.invariant}
\begin{equation}
    f(v_1,\ldots, v_n) = g ( M) = \tilde g(\inner{v_{i}}{v_{i+s  (\mathrm{mod}\; n) }}_{ 1\leq i\leq n,\; 0\leq s \leq d}) =\tilde g( \Omega(M)),
\end{equation}
where $M$ is either $V^\top V$ or $V^\top\Lambda\, V$.
Therefore, any function $\tilde g: \Omega(M) \to \mathbb R^\ell$ uniquely determines $f$ on almost all possible inputs $v_1,\ldots, v_n$. This observation provides a parameterization for Lorentz and orthogonal-invariant and equivariant functions with a small number of scalars. 

We note that the results from \cite{pimentel2016characterization} don't assume the matrix $M$ is positive semi-definite, only that it is low rank. This is useful since $M$ is not positive semi-definite in the Minkowski case. 
We also remark that the learning of the functions  \eqref{od.equivariant} and \eqref{eq.invariant} can be done from $\Omega(M)$ without ever needing to compute all the scalars.

One disadvantage of the subset of scalars proposed in this section is that the sampling procedure $\Omega(M)$ is not permutation invariant. One of our future goals is to find a set of permutation invariant scalars that are universally expressive.

\section{Model-building considerations}\label{app.model}

The results in Section~\ref{sec.equivariance} provide a simple characterization of all scalar functions and vector functions that satisfy important symmetries (see Table~\ref{tab.groups}) for classical physics and special relativity.
Here we discuss how we might use this to design and build a model.

First we need to identify the inputs, what groups are acting, and how the group action affects the inputs and outputs.
For example, the translation group acts differently on position vectors than displacement vectors (see Table~\ref{tab.actions}) than other kind of vectors (velocities, accelerations, fields, etc).
Moreover, permutations don't typically act on the entire set of inputs but on tuples of inputs like the individual-particle charge, position, and velocity $(q_i, r_i, v_i)$ in the electromagnetic example in Section~\ref{sec.examples}.

The causal and group-action structure of the model must correspond to the structure of the physics problem.
For example, suppose we are interested in solving an $n$-body problem in which we predict the trajectories of a set of interacting charged particles.
The target functions $h_i()$, which predict the position of particle $i$ at time $t$, can be described as functions of that particle's charge, position, and velocity, and the set of charges, positions, and velocities of all the others.
The group actions (rotation, translation, and permutation) can be written as:
\begin{align}
h_i( q_i, Q\,r_i, Q\,v_i, (q_j,Q\,r_j,Q\,v_j)_{j=1; j\neq i}^n)
&= Q\,h_i( q_i, r_i, v_i, (q_j,r_j,v_j)_{j=1; j\neq i}^n)
\\
h_i(q_i, r_i + w, v_i, (q_j,r_j+w,v_j)_{j=1; j\neq i}^n)
&= h_i( q_i, r_i, v_i, (q_j,r_j,v_j)_{j=1; j\neq i}^n) + w
\\
h_i(q_i, r_i , v_i, (q_{\sigma_i(j)},r_{\sigma_i(j)},v_{\sigma_i(j)})_{j=1; \sigma_i(j)\neq i}^n)
&= h_i( q_i, r_i, v_i, (q_{j},r_{j},v_{j})_{j=1; j\neq i}^n)\label{eq:foo}
\\
h_{\sigma(i)}(q_{i}, r_{i}, v_{i}, (q_{j},r_{j},v_{j})_{j=1; j\neq i}^n)
&= h_{i}( q_{i}, r_{i}, v_{i}, (q_{j},r_{j},v_{j})_{j=1; j\neq {i}}^n)\label{eq:bar}
~,
\end{align}
where $\sigma$ is any permutation on $n$ elements, and $\sigma_i$ is a permutation that fixes $i$, i.e., $\sigma_i(i)=i$.
The primary results of this paper imply that, in this $n$-body case, the rotation symmetries can be enforced by constructing the invariant scalar products, and building functions thereof.
The translation symmetries can be enforced by taking positional differences with respect to $r_i$ prior to constructing the scalars.
The first permutation symmetry \eqref{eq:foo} can be enforced by working on a set-based neural network (or graph neural network), and the second permutation symmetry \eqref{eq:bar} can be enforced by taking all the functions $h_i=h$ to be identical.
In general, graph-based or set-based methods are probably good frameworks for physics problems \cite{cranmer2020lagrangian, sanchez-gonzalez2019hamiltonian}, and the dynamics can probably be implemented with a form of message-passing.

\end{document}